\documentclass[journal]{IEEEtran}

\usepackage{amsmath,amssymb,amsfonts}
\usepackage{algorithm}
\usepackage[noend]{algpseudocode}
\usepackage{graphicx}
\usepackage{xcolor}
\usepackage{textcomp}
\usepackage{booktabs}
\usepackage{float}
\usepackage{bm}
\usepackage[colorlinks,linkcolor=blue,citecolor=blue]{hyperref}
\usepackage{multirow,makecell}
\usepackage{tikz}
\usepackage{pgfplots}
\pgfplotsset{compat=1.18}
\usetikzlibrary{calc}
\usepackage{amsthm}
\usepackage{cite}

\newtheorem{proposition}{Proposition}
\newtheorem{lemma}{Lemma}
\newtheorem{corollary}{Corollary}

\begin{document}

\title{Ghost Attractor Networks: Basin-Structured Dynamical Decoders for Closed-Loop Sequential Generation}

\author{Tianyu~Wang,
        Ying~Wang,
        Zhihao~Liu,
        Xi~Vincent~Wang,
        and~Lihui~Wang%
\thanks{This work has been submitted to the IEEE for possible publication. Copyright may be transferred without notice, after which this version may no longer be accessible.}%
\thanks{This research work was supported by the Swedish research centre of eXcellence in PRoduction RESearch (XPRES).}%
\thanks{Tianyu Wang, Zhihao Liu, Xi Vincent Wang, and Lihui Wang are with the Department of Production Engineering, KTH Royal Institute of Technology, Stockholm, Sweden (corresponding author: Tianyu Wang, e-mail: tianyuwa@kth.se).}%
\thanks{Ying Wang is with the Department of Decision and Control Systems, KTH Royal Institute of Technology, Stockholm, Sweden.}}

\markboth{Preprint. This work has been submitted to the IEEE for possible publication.}%
{Wang \MakeLowercase{\textit{et al.}}: Ghost Attractor Networks}

\maketitle

\begin{abstract}
Sequential output generation with large-scale Transformer and diffusion decoders pays a memory cost that grows with sequence length, plus iterative per-step computation. Replacing them with small feed-forward decoders restores efficiency but produces unstructured latent representations that limit closed-loop control: phase-conditioned action generation and cross-step latent carry-over both require a latent geometry with stable basins. This article proposes Ghost Attractor Networks, a theoretically derived dynamical decoder whose latent evolves under a learned potential with drift and produces a basin-attractor structure by construction. Three desiderata (multi-modality, decoder-level single-pass switching, and constant memory) motivate the potential-drift form, and mode transitions arise as saddle-node bifurcations with ghost-attractor escape. A hierarchical phase-space decomposition separates first-order basin convergence from second-order proprioceptive refinement. Empirically, a Ghost trained end-to-end with a behavioral-cloning and contrastive objective exhibits the predicted gradient-flow contraction in its potential, with the gradient norm decaying by 67 percent across five integration steps on 1430 held-out samples. Ghost is evaluated as a robotic action decoder. A 2.3-million-parameter Ghost matches the offline accuracy of a 1.07-billion-parameter Diffusion Transformer at 462 times fewer parameters and 32 times lower latency, and beats five alternative 2M-parameter decoders (MLP, Neural ODE, CVAE, Transformer, 1-step Diffusion) on offline mean squared error by 5.9 to 29 percent. On the LIBERO-10 closed-loop benchmark, phase conditioning on Ghost's basin-structured latent yields a 13.5 percentage-point success-rate gain over a feed-forward MLP baseline, and persistent-latent ensembling reaches a 95.7 percent final success rate.
\end{abstract}

\begin{IEEEkeywords}
Dynamical-system neural networks, multi-task learning, neural attractor dynamics, neural ODE, zero-weight adaptation.
\end{IEEEkeywords}

\section{Introduction}
\label{sec:intro}

\IEEEPARstart{L}{arge-scale} neural networks for sequential output generation have advanced rapidly in robotics, autonomous decision-making, and embodied AI~\cite{brohan2023rt2,bjorck2025groot,kim2024openvla}. Yet these architectures share a structural limitation: temporal context scales linearly with the interaction horizon, and mode transitions require accumulating enough new tokens to overwrite prior context. For a network generating outputs at 100~Hz, switching behavioral mode is not instantaneous; it takes tens to hundreds of timesteps of context accumulation, during which the network produces ambiguous or incorrect outputs.

This limitation is rooted in the dominant architectural paradigm. Transformer key-value caches~\cite{vaswani2017attention}, iterative diffusion decoders~\cite{chi2023diffusion}, and in-context learning histories~\cite{laskin2022incontext} all encode temporal context as growing token sequences with $\mathcal{O}(t)$ memory cost. Per-step inference therefore scales with both sequence length and decoder size: a billion-parameter Diffusion Transformer head requires several denoising iterations per control step, and a Transformer decoder with full attention reads its entire cached history on every forward pass. At deployment frequencies of 50--100~Hz, this combined cost consumes most of the per-step time budget, leaving little room for perception, control, or safety logic. The reliance on context accumulation also delays behavioral transitions, since the decoder must observe enough new tokens for its output to reflect a new mode, which makes instant mode switching structurally infeasible.

A straightforward remedy is to replace the large decoder with a small feed-forward network (MLP, Neural ODE, or single-step diffusion). This restores efficiency but introduces a different deficit specific to closed-loop control. Sequential decoding with modal behavioral structure imposes two coupled requirements on a deployed policy: (i)~specialization across qualitatively distinct sub-phases via an external phase-conditioning signal, and (ii)~integration across timesteps because each observation is partial, typically by carrying the decoder's own latent state $z_t$ into the next forward pass. Both mechanisms presuppose that the latent space is geometrically organized into stable basins. Phase conditioning then routes the trajectory between qualitatively different basins rather than perturbing a generic feature vector, and persistent-$z$ carry-over remains stable only when each basin has an equilibrium for the latent to settle into between steps. A plain feed-forward decoder provides no such geometry. Under matched parameter budget, phase conditioning on a stateless decoder collapses closed-loop performance, while the same mechanism on a basin-structured latent yields a substantial gain (quantified in Section~\ref{sec:results}).

Biological neural circuits show that the two deficits above need not be traded against each other. In cortex and basal ganglia, behavioral repertoires are encoded as discrete dynamical regimes (basins) of the neural state space, with transitions mediated by continuous trajectories that pass through unstable saddle regions before settling into a new regime~\cite{vyas2020computation,churchland2012neural,sussillo2013opening}. The representation is both efficient (no growing token cache, the latent state has fixed dimension) and geometrically structured, with basins for distinct modes that support stable carry-over and rapid transitions. This biological example motivates an architectural principle: encode behavioral modes as basins of a learned potential, and let mode transitions arise as continuous trajectories driven by external context, rather than from token accumulation (the large-decoder failure) or unstructured feature mappings (the small-decoder failure).

This article introduces Ghost Attractor Networks, a neural architecture that addresses both the efficiency and the structural-deficit problems above. The central mechanism is the ghost attractor~\cite{strogatz2015nonlinear}: a region of state space where the vector field nearly vanishes, transiently trapping trajectories before releasing them along escape channels into different behavioral basins. The network learns a potential landscape whose topology encodes distinct behavioral modes as attractor basins, and uses ghost-attractor escape dynamics to mediate transitions. A context change therefore produces a switched action within a single decoder forward pass, while the basin geometry that closed-loop phase conditioning and persistent-$z$ carry-over rely on is inherited from the architecture itself. The presence of this attractor mechanism is verified directly in a Ghost trained end-to-end on the multi-task data: the gradient norm $\|\nabla_z U\|$ contracts by $67\%$ across the five integration steps (Fig.~\ref{fig:attractor_emergence}), matching the gradient-flow signature implied by the architectural form. This addresses three open challenges in sequential decoding: horizon-independent memory cost, decoder-level single-pass behavioral switching, and structured multi-task compression that supports closed-loop deployment. Each is discussed in detail through the contributions below.

The contributions of this article are as follows:

\begin{enumerate}
\item[1)] Ghost Attractor Networks, a constant-memory dynamical decoder, are proposed. The decoder encodes behavioral modes as basins of a learned potential and switches between them via ghost-attractor escape. The resulting latent geometry supports phase conditioning and persistent-$z$ carry-over. The architecture uses a hierarchical two-phase integration within the same potential-drift framework: first-order basin convergence, followed by second-order proprioceptive refinement near basin minima.
\item[2)] This form is derived from three desiderata (multi-modality, single-pass switching at the decoder level, and constant memory) and supported by formal analysis: dwell-time scaling (Lemma~\ref{lem:dwell}), ghost-mediated switching (Proposition~\ref{prop:switching}), combinatorial expressivity (Corollary~\ref{cor:combinatorial}), and a local Lyapunov stability argument.
\item[3)] The architecture is empirically validated in a trained model. The predicted gradient-flow contraction is directly observed in a trained Ghost (Fig.~\ref{fig:attractor_emergence}). A 2.3M-parameter Ghost decoder delivers order-of-magnitude reductions in parameter count and inference latency relative to a 1.07B Diffusion Transformer, lower offline imitation error than five same-budget decoder alternatives, and a substantially higher closed-loop success rate than a stateless baseline under matched parameter budget. Detailed comparisons, ablations, and external benchmark anchoring are deferred to Section~\ref{sec:results}.
\end{enumerate}

\section{Related Work}
\label{sec:related}

\subsection{Neural Dynamical Systems}
Neural ODEs~\cite{chen2018neuralode} parameterize continuous-time dynamics with neural networks, but produce smooth trajectories with no mechanism for abrupt mode transitions. Modern Hopfield networks~\cite{ramsauer2021hopfield} connect attractor-based memory to Transformer attention but focus on pattern retrieval rather than sequential output generation. Hamiltonian Neural Networks~\cite{greydanus2019hamiltonian} impose energy-conservation structure for physical-dynamics learning, again with no mechanism for behavioral switching. Ghost attractors near saddle-node bifurcations have been studied in physical systems~\cite{schneider2020ghost,strogatz2015nonlinear}, and Durstewitz et al.~\cite{durstewitz2025neuroscience} argue that abrupt biological learning transitions are mediated by bifurcation-induced state-space reorganizations analogous to ghost-attractor escape. The proposed work turns this principle into a learnable, end-to-end-trained neural decoder, in which the ghost mechanism is a designed architectural primitive in the latent action space rather than an emergent phenomenon.

\subsection{Adaptive Policy Architectures}
Meta-reinforcement learning achieves rapid task adaptation via learned initialization~\cite{finn2017maml}, context-conditioned policies~\cite{rothfuss2019promp}, or hypernetwork-generated parameters, but typically requires online gradient computation or episodic context accumulation. In-context RL~\cite{laskin2022incontext,lee2024dpt,duan2016rl2} enables zero-weight adaptation through attention over interaction histories; however, growing token sequences incur $\mathcal{O}(t)$ memory cost and require accumulation of sufficient new observations before behavior can change. Recurrent architectures such as GRU and state-space models maintain constant memory, but encode behavioral modes implicitly in weight matrices and require slow context integration to switch. The proposed architecture instead encodes modes as explicit basins in a learned potential, enabling single-step transitions driven by a low-dimensional modulation signal.

\subsection{Generative Sequential Decoding}
Diffusion policies~\cite{chi2023diffusion,pearce2023imitating} generate action sequences through iterative denoising. Flow matching~\cite{lipman2023flow} offers a simulation-free training alternative with straighter generation paths, and consistency models~\cite{song2023consistency,prasad2024consistency} reduce the number of denoising steps required. All such iterative generative approaches incur per-step overhead that scales with the number of iterations. RDT-1B~\cite{liu2024rdt} pushes Diffusion Transformer decoders to a billion parameters for multi-task competence, and Tan et al.~\cite{luo2024robot_tnnls} model multi-task manipulation via visuomotor latent diffusion. In contrast, the proposed dynamical decoder replaces iterative denoising with a fixed, small number of integration steps and reaches comparable accuracy at much lower parameter cost.

\subsection{Foundation Models with Sequential Decoders}
Recent vision-language-action (VLA) foundation models for robotics unify perception, language, and action generation in a single architecture trained on large-scale demonstration data. RT-1/RT-2~\cite{brohan2022rt1,brohan2023rt2} established the VLA paradigm with Transformer-based policies; GR00T N1~\cite{bjorck2025groot} extended this to humanoid robots with a dual-system design that separates a vision-language backbone from a Diffusion Transformer decoder. OpenVLA~\cite{kim2024openvla}, Octo~\cite{octo2024}, $\pi_0$~\cite{black2024pi0}, OpenVLA-OFT~\cite{kim2025openvla_oft}, and Dream-VLA~\cite{ye2025dreamvla} explore variants of the same paradigm. All these approaches rely on Transformer or diffusion decoders with $\mathcal{O}(t)$ memory growth and large per-step parameter counts. The proposed work is complementary: the pretrained backbone is kept, and only the decoder is replaced with a constant-memory dynamical alternative.

\subsection{Dynamical Movement Primitives and Attractor Landscapes}
Classical Hopfield networks~\cite{hopfield1982neural} store patterns as fixed-point attractors. Dynamical movement primitives (DMPs)~\cite{schaal2006dynamic,ijspeert2013dynamical} encode motor behaviors as stable limit cycles or point attractors, and the Stable Estimator of Dynamical Systems (SEDS)~\cite{khansari2011learning} learns globally stable nonlinear dynamics from demonstrations. All DMP and SEDS variants, however, require either a separate dynamical system per mode or an explicit discrete supervisor to select between modes; the dynamical system itself does not execute the transition. The proposed framework removes this discrete supervisor: mode switching becomes an emergent property of a single potential landscape, where modulation annihilates one basin into a ghost region and the trajectory escapes to another. The biological grounding (behaviors as distinct flow fields in neural population state space, with transitions through continuous dynamical evolution~\cite{vyas2020computation,sussillo2013opening,churchland2012neural}) further motivates this architectural principle.

\section{Theoretical Foundations}
\label{sec:theory}

The proposed architecture is derived from first principles. The requirements of multi-mode decoding with constant memory lead to potential-driven dynamics with drift on a modulated landscape, and the ghost-attractor switching mechanism emerges as a generic consequence. The modulatory context signal is denoted $\mathbf{c}_t \in \mathbb{R}^{d_c}$ throughout this section; Section~\ref{sec:method} instantiates it as a context embedding $\mathbf{e}_t$.

\subsection{From Desiderata to Potential Dynamics}

Consider a general fixed-parameter dynamical decoder. Let $\mathbf{x}_t \in \mathbb{R}^{d_x}$ denote the latent state and $\mathbf{c}_t \in \mathbb{R}^{d_c}$ a modulatory control signal that encapsulates all external inputs (observations, task context, etc.). Throughout the theory the dynamics are continuous-time, and are discretized via explicit Euler (step size $\Delta t$) at the architectural level (Section~\ref{sec:method}). The latent dynamics evolve as:
\begin{equation}
\dot{\mathbf{x}} = \mathbf{F}_{\theta}(\mathbf{x}, \mathbf{c}_t),
\label{eq:full_dynamics}
\end{equation}
where $\mathbf{F}_{\theta}$ is a neural vector field and no weight updates on $\theta$ occur during adaptation. Actions are decoded as $\mathbf{a}_t = \boldsymbol{\pi}_{\theta}(\mathbf{x}_t)$. Three practical requirements motivate a specific functional form for $\mathbf{F}_\theta$.

Three properties of the decoder are required: (i)~\emph{multi-modality}, i.e., support for $K \ge 2$ distinct stable output regimes, each producing a different action profile, which demands $K$ stable equilibria in the latent space; (ii)~\emph{instant switching}, where a change in the external signal $\mathbf{c}_t$ triggers a transition between regimes; and (iii)~\emph{constant memory}, meaning the state dimension $d_x$ is independent of the number of modes $K$ and the temporal horizon.

The derivation below shows that these requirements motivate the form
\begin{equation}
\dot{\mathbf{z}} = \underbrace{-\nabla_\mathbf{z} U_\theta(\mathbf{z}; \mathbf{c}_t)}_{\text{potential gradient}} + \underbrace{\mathbf{b}_\theta(\mathbf{z}; \mathbf{c}_t)}_{\text{drift}},
\label{eq:derived_slow}
\end{equation}
where each term is justified by a separate step:
\emph{Step~1} introduces the potential gradient $-\nabla_\mathbf{z} U_\theta$ to encode multiple stable basins (property~(i));
\emph{Step~2} shows how varying $\mathbf{c}_t$ enables switching (property~(ii)) via saddle-node bifurcation, with the ghost-attractor mechanism emerging generically;
\emph{Step~3} adds the drift $\mathbf{b}_\theta$ to break the symmetry of pure gradient flow.
Eq.~\eqref{eq:derived_slow} is refined via fast-slow decomposition in Section~\ref{sec:theory}.B and instantiated in Section~\ref{sec:method}. The potential maintains basin stability (via the Lyapunov argument in Section~\ref{sec:lyapunov}), while the drift enables task-dependent directional preferences.

\emph{Step 1: Multi-modality implies a potential.}
Property~(i) demands multiple stable equilibria. Among vector fields admitting such equilibria, gradient systems $\dot{\mathbf{z}} = -\nabla_\mathbf{z} U_\theta(\mathbf{z}; \mathbf{c}_t)$ for a scalar potential $U_\theta: \mathbb{R}^{d_z} \times \mathbb{R}^{d_c} \to \mathbb{R}$ (with $\mathbf{z}$ the basin-carrying component of $\mathbf{x}$, formalized in Section~\ref{sec:theory}.B) are distinguished by a key structural property: the potential decreases monotonically along trajectories at any fixed $\mathbf{c}_t$ ($\dot{U}_\theta = -\|\nabla_\mathbf{z} U_\theta\|^2 \le 0$ when $\mathbf{c}_t$ is held constant), which precludes limit cycles and chaotic transients (by the Poincar\'{e}--Bendixson theorem in 2D; by the gradient structure in general)~\cite{strogatz2015nonlinear}. Each local minimum of $U_\theta$ defines a stable basin, encoding a behavioral mode. Crucially, a single scalar function over $\mathbb{R}^{d_z}$ can host an arbitrary number of minima while $d_z$ (and thus $d_x$) remains fixed, satisfying property~(iii). This justifies the potential-gradient term $-\nabla_\mathbf{z} U_\theta$ in Eq.~\eqref{eq:derived_slow}.

\emph{Step 2: Switching implies modulated potential and saddle-node bifurcation.}
Property~(ii) demands that $\mathbf{c}_t$ can destabilize one basin and redirect the trajectory to another. As $\mathbf{c}_t$ varies, the landscape $U_\theta(\cdot; \mathbf{c}_t)$ deforms continuously. By elementary bifurcation theory~\cite{strogatz2015nonlinear}, the generic (codimension-one) mechanism by which a smooth potential loses a local minimum is the \emph{saddle-node bifurcation}: a minimum and a neighboring saddle coalesce and annihilate as $\mathbf{c}_t$ crosses a critical value $\mathbf{c}^*$. Near the annihilation point, the one-dimensional normal form is
\begin{equation}
\dot{z} = \mu + z^2,
\label{eq:normal_form}
\end{equation}
where $z \in \mathbb{R}$ is a local coordinate along the bifurcating direction, and $\mu \in \mathbb{R}$ is a scalar bifurcation parameter obtained by projecting $\mathbf{c}_t - \mathbf{c}^*$ onto the critical unstable direction. For $\mu < 0$ two equilibria exist (a stable node and a saddle), at $\mu = 0$ they merge, and for $\mu > 0$ both vanish. In the post-bifurcation regime ($\mu > 0$), the vector field is everywhere nonzero yet remains small near the former equilibrium: $|\dot{z}|$ attains its minimum value $\mu$ at $z = 0$, and the slow-passage bottleneck spans a region of width $\mathcal{O}(\sqrt{\mu})$. This is the \emph{ghost region}, where the trajectory dwells for a time $T_{\mathrm{esc}} \sim \pi/\sqrt{\mu}$ before escaping to a surviving basin (a classical bottleneck-passage result obtained by integrating $\int dz/(\mu + z^2)$~\cite{strogatz2015nonlinear}). The ghost-attractor mechanism thus emerges as a \emph{generic consequence} of smoothly modulating a multi-basin potential; it need not be engineered explicitly. This justifies how Eq.~\eqref{eq:derived_slow} achieves switching: $\mathbf{c}_t$ acts as a bifurcation parameter that reshapes $U_\theta$ to navigate between basins.

\emph{Step 3: Asymmetric transitions require drift.}
Pure gradient flow is quasi-symmetric: the transition landscape from basin $\mathcal{B}_i$ to $\mathcal{B}_j$ is determined entirely by the potential topology, leaving no room for directional preferences. Adding a non-conservative drift $\mathbf{b}_\theta: \mathbb{R}^{d_z} \times \mathbb{R}^{d_c} \to \mathbb{R}^{d_z}$ breaks this constraint, yielding the second term of Eq.~\eqref{eq:derived_slow}.

\subsection{Fast-Slow Decomposition}

The latent state is decomposed into slow and fast components, $\mathbf{x}_t = (\mathbf{z}_t, \mathbf{y}_t) \in \mathbb{R}^{d_z} \times \mathbb{R}^{d_y}$. The slow variable $\mathbf{z}$ inherits the potential-drift dynamics of Eq.~\eqref{eq:derived_slow}, while a fast variable $\mathbf{y}$ tracks a manifold function of $\mathbf{z}$. Specializing the full-state vector field $\mathbf{F}_\theta$ in Eq.~\eqref{eq:full_dynamics} to this two-timescale structure yields:
\begin{align}
\dot{\mathbf{z}} &= \mathbf{f}_{\theta}(\mathbf{z},\mathbf{y},\mathbf{c}_t), \label{eq:slow}\\
\varepsilon\,\dot{\mathbf{y}} &= \mathbf{g}_{\theta}(\mathbf{z},\mathbf{y},\mathbf{c}_t), \quad 0 < \varepsilon \ll 1. \label{eq:fast}
\end{align}
For a frozen context $\mathbf{c}_t$, suppose the fast equation admits a graph of equilibria
\begin{equation}
\mathcal{M}(\mathbf{c}_t) = \left\{(\mathbf{z}, \mathbf{y}) : \mathbf{y} = \mathbf{h}_\theta(\mathbf{z}; \mathbf{c}_t),\; \mathbf{g}_\theta(\mathbf{z}, \mathbf{h}_\theta, \mathbf{c}_t) = \mathbf{0}\right\}.
\label{eq:slow_manifold}
\end{equation}
Under normal hyperbolicity ($\lambda_{\max}(\partial \mathbf{g}_\theta/\partial \mathbf{y}) \le -\gamma$ for some $\gamma > 0$), Fenichel's theorem~\cite{fenichel1979geometric} guarantees that for sufficiently small $\varepsilon$, $\mathcal{M}(\mathbf{c}_t)$ persists as a locally invariant slow manifold $\mathcal{M}_\varepsilon$ that is an $\mathcal{O}(\varepsilon)$ perturbation of $\mathcal{M}(\mathbf{c}_t)$, onto which trajectories collapse at rate $1/\varepsilon$. On this manifold the slow dynamics reduce to $\dot{\mathbf{z}} = \mathbf{f}_\theta(\mathbf{z}, \mathbf{h}_\theta(\mathbf{z}), \mathbf{c}_t)$, which is identified with the potential-drift form of Eq.~\eqref{eq:derived_slow}: the dependence on $\mathbf{y}$ is absorbed via the manifold projection. Since $\varepsilon \ll 1$, the fast variable relaxes almost instantaneously, so the system effectively evolves along the slow manifold. This justifies a decoupled architecture: integrate only $\mathbf{z}$ (Section~\ref{sec:method}), then recover $\mathbf{y}$ via the manifold projection at readout time.

\subsection{Ghost-Induced Switching}

The switching properties of the derived slow dynamics (Eq.~\eqref{eq:derived_slow}), which evolve on the slow manifold $\mathcal{M}_\varepsilon$, are analyzed next. A ghost region $\mathcal{G} \subset \mathcal{M}_\varepsilon$ is a local subset where the potential gradient is small, $\|\nabla_{\mathbf{z}} U_{\theta}(\mathbf{z};\mathbf{c}_t)\| \le \delta$ with $\delta \ll 1$, but no true fixed point exists. The drift $\mathbf{b}_\theta$ is a separate contribution to the velocity (Lemma~\ref{lem:dwell} bounds dwell time using both $\delta$ and $\|\mathbf{b}_\theta\|$). The ``slow evolution'' here is a local bottleneck on top of the already-slow $\mathbf{z}$-timescale of the fast-slow decomposition: within $\mathcal{G}$, even the slow dynamics are further suppressed.

\begin{lemma}[Ghost attractor dwell time]
\label{lem:dwell}
Consider a ghost region $\mathcal{G}$ where $\|\nabla_\mathbf{z} U_\theta\| \le \delta$ and $\|\mathbf{b}_\theta\| \le v_0$. By the triangle inequality applied to the vector sum $\dot{\mathbf{z}} = -\nabla_\mathbf{z} U_\theta + \mathbf{b}_\theta$, the speed satisfies $\|\dot{\mathbf{z}}\| \le \delta + v_0$ throughout $\mathcal{G}$. Hence, for characteristic length $L$ along the trajectory, the dwell time obeys $T_{\mathrm{esc}} \ge L/(\delta + v_0)$. For a saddle-node ghost with negligible drift, direct integration of the normal form (Eq.~\eqref{eq:normal_form}) tightens this to $T_{\mathrm{esc}} \sim \pi/\sqrt{\mu}$ in the post-bifurcation regime $\mu > 0$. (Proof in Appendix~\ref{app:proofs}.)
\end{lemma}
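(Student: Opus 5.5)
The plan has two independent parts, matching the two claims in Lemma~\ref{lem:dwell}.

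\emph{General dwell-time bound.} Fix the context $\mathbf{c}_t$ and take a trajectory $\mathbf{z}(t)$ of Eq.~\eqref{eq:derived_slow} that enters $\mathcal{G}$ at time $t_0$. Let $t_1$ be the first time at which the trajectory has covered path length $L$ inside $\mathcal{G}$; this is how $L$ should be read, as arc length along the trajectory within the region. For every $t \in [t_0,t_1]$ the triangle inequality gives
\begin{equation*}
\|\dot{\mathbf{z}}(t)\| \le \|\nabla_\mathbf{z} U_\theta(\mathbf{z}(t);\mathbf{c}_t)\| + \|\mathbf{b}_\theta(\mathbf{z}(t);\mathbf{c}_t)\| \le \delta + v_0 .
\end{equation*}
The arc length then satisfies $L = \int_{t_0}^{t_1}\|\dot{\mathbf{z}}\|\,dt \le (\delta+v_0)(t_1-t_0)$, so $T_{\mathrm{esc}} \ge t_1 - t_0 \ge L/(\delta+v_0)$. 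If $L$ is instead taken as the Euclidean distance between the entry and exit points, the same bound follows from $\|\mathbf{z}(t_1)-\mathbf{z}(t_0)\| \le \int\|\dot{\mathbf{z}}\|\,dt$. The only hypothesis needed is that the two pointwise bounds hold along the whole segment, which is exactly the definition of $\mathcal{G}$. Because $\mathbf{c}_t$ is frozen, the system is autonomous on this interval and solutions exist locally by smoothness of $U_\theta$ and $\mathbf{b}_\theta$.

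\emph{Sharp asymptotics near a saddle-node.} Here I invoke the reduction in Step~2 and treat the drift as negligible. Restricting to the center direction, the dynamics take the normal form Eq.~\eqref{eq:normal_form} up to higher-order terms. For $\mu>0$, $\dot z>0$ everywhere, so $t$ is a monotone function of $z$, and the passage time from $z=-a$ to $z=a$ is
\begin{equation*}
T(\mu,a) = \int_{-a}^{a}\frac{dz}{\mu+z^2} = \frac{2}{\sqrt{\mu}}\arctan\!\left(\frac{a}{\sqrt{\mu}}\right).
\end{equation*}
For fixed $a$ and $\mu\to 0^+$, the arctangent tends to $\pi/2$, giving $T \to \pi/\sqrt{\mu} + \mathcal{O}(1)$. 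Equivalently, one can send $a\to\infty$ for exactly $\pi/\sqrt{\mu}$. Consistency with the first part is a sanity check: in the window $|z|\lesssim\sqrt{\mu}$ the speed is at most $2\mu$ and the width is $L\sim 2\sqrt{\mu}$, so the crude bound gives $T \gtrsim 1/\sqrt{\mu}$, the same scaling.

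\emph{Main obstacle.} The hardest step is justifying that the normal form governs the full multidimensional dynamics in the ghost region. Higher-order terms $\mathcal{O}(z^3, \mu z)$, the transverse stable directions, and a nonzero drift all perturb the integrand.

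I would first appeal to the center-manifold reduction for the codimension-one saddle-node, which Step~2 already assumes. Transverse directions then contract exponentially, and only the one-dimensional flow on the center manifold matters. On that manifold, the dominant contribution to $\int dz/\dot z$ comes from $|z| = \mathcal{O}(\sqrt{\mu})$. Rescaling $z = \sqrt{\mu}\,s$ turns the integrand into $\mu^{-1/2}\,ds/(1+s^2+\mathcal{O}(\sqrt{\mu}))$, so these corrections change the leading coefficient $\pi$ only at relative order $\mathcal{O}(\sqrt{\mu})$. The drift must also be small enough not to shift the effective $\mu$; otherwise $\mu$ is simply replaced by $\mu$ plus the projected drift.

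This is why the lemma states the result only as an asymptotic relation ``$\sim$'' under negligible drift, and I would present it at that level.
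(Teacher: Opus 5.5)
Your proposal is correct and follows the same route as the paper's proof. The paper first uses the triangle-inequality speed bound $\|\dot{\mathbf{z}}\|\le\delta+v_0$ to get $T_{\mathrm{esc}}\ge L/(\delta+v_0)$, then integrates the normal form $\dot z=\mu+z^2$ directly over $(-\infty,\infty)$ to obtain exactly $\pi/\sqrt{\mu}$; your finite-window formula $\frac{2}{\sqrt{\mu}}\arctan(a/\sqrt{\mu})$ and your center-manifold treatment of higher-order terms and drift go further than the paper does, making explicit what it leaves implicit in the ``$\sim$''.
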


This scaling has a practical consequence: by adjusting $\mathbf{c}_t$, the network tunes transition speed over orders of magnitude without weight changes.

\begin{figure}[!t]
\centering
\includegraphics[width=\columnwidth]{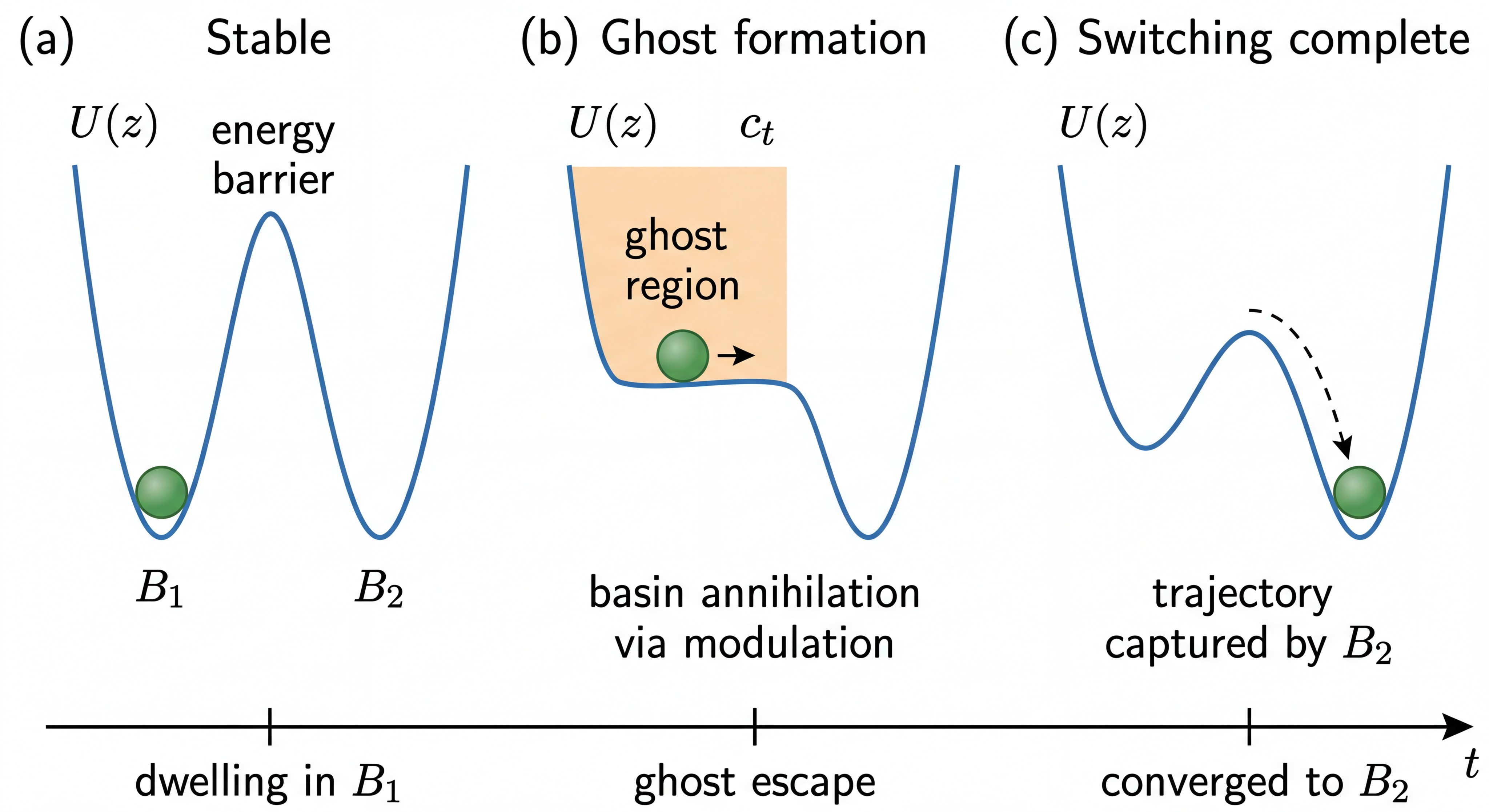}
\caption{Ghost attractor switching mechanism. (a)~Stable: two basins $\mathcal{B}_1$, $\mathcal{B}_2$ separated by an energy barrier. (b)~Modulation $\mathbf{c}_t$ annihilates $\mathcal{B}_1$ into a ghost region (orange) where the trajectory dwells with slow drift. (c)~The trajectory escapes and converges to $\mathcal{B}_2$.}
\label{fig:ghost_mechanism}
\end{figure}

\begin{proposition}[Ghost-mediated switching]
\label{prop:switching}
Let $\mathcal{B}_i$ be a stable basin with equilibrium $\mathbf{z}_s$ and adjacent saddle $\mathbf{z}_u$. After modulation $\mathbf{c}_t \to \mathbf{c}'_t$ (held fixed), suppose: (1) a saddle-node annihilates $\mathbf{z}_s$ and $\mathbf{z}_u$; (2) the resulting ghost channel satisfies $\|\nabla_\mathbf{z} U_\theta\| \le \delta$; (3) the saddle's unstable manifold reaches $\mathcal{B}_j$; (4) drift is dominated by $\|\nabla_\mathbf{z} U_\theta\|$ outside the ghost. Then trajectories starting in a neighborhood of $\mathbf{z}_s$ traverse the ghost and reach $\mathcal{B}_j$ within $T_{\mathrm{esc}}$ (Lemma~\ref{lem:dwell}); trajectories elsewhere may follow other escape routes.
\end{proposition}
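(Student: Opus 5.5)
The argument splits the trajectory into three phases: a local passage through the ghost channel, a departure along the remnant of the saddle's unstable manifold, and a global arrival in $\mathcal{B}_j$. Throughout, the modulated context $\mathbf{c}'_t$ is held fixed, so Eq.~\eqref{eq:derived_slow} is an autonomous vector field and standard continuous-dependence results apply. The intended chain is: local normal form $\Rightarrow$ finite passage time $\Rightarrow$ exit along the remnant unstable direction $\Rightarrow$ Lyapunov capture in $\mathcal{B}_j$.

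\emph{Phase 1: passage through the ghost.} By hypothesis (1), in a neighborhood $N$ of the former pair $\mathbf{z}_s,\mathbf{z}_u$ I would use center-manifold reduction to bring the dynamics to the normal form Eq.~\eqref{eq:normal_form} with $\mu>0$. The reduction is along the one-dimensional critical direction. Transverse directions are contracting because $\mathbf{z}_s$ was a stable node, so its transverse Hessian eigenvalues stay positive for modulations close to $\mathbf{c}^*$. On the center direction the velocity is everywhere positive, at least $\mu$, so no trajectory starting near $\mathbf{z}_s$ can stall. Integrating $dz/(\mu+z^2)$ gives the exit time $T_{\mathrm{esc}}\sim\pi/\sqrt{\mu}$, and Lemma~\ref{lem:dwell}, together with hypothesis (2), supplies the complementary lower bound $L/(\delta+v_0)$. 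Since the center-manifold flow is monotone in $z$, every trajectory in the neighborhood exits $N$ through the side formerly occupied by $\mathbf{z}_u$.

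\emph{Phase 2: departure along the remnant unstable manifold.} Before the bifurcation, the unstable manifold $W^u(\mathbf{z}_u)$ branch pointing away from $\mathbf{z}_s$ reaches $\mathcal{B}_j$ by hypothesis (3). I would argue that, for $\mathbf{c}'_t$ close to $\mathbf{c}^*$, the exit orbit from $N$ is $C^0$-close to this branch on any finite time interval. This follows from continuous dependence of solutions on parameters and initial data. It then remains to show that the orbit ends up inside $\mathcal{B}_j$ rather than merely near it, which I expect to be the main obstacle. I would handle it by noting that $\mathcal{B}_j$ is an open basin of a stable minimum. The branch of $W^u$ enters $\mathcal{B}_j$ at a finite time, landing at a point a positive distance inside it. Choosing the modulation close enough to $\mathbf{c}^*$ that the deviation is smaller than this distance places the perturbed orbit in $\mathcal{B}_j$ as well. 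Continuation of the basin itself under the parameter change follows from the structural stability of the hyperbolic minimum.

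\emph{Phase 3: capture.} Once the orbit is inside $\mathcal{B}_j$, convergence follows from hypothesis (4). Outside the ghost the drift is dominated by the gradient, so $\dot U_\theta=-\|\nabla U_\theta\|^2+\langle\nabla U_\theta,\mathbf{b}_\theta\rangle<0$ away from equilibria. Hence $U_\theta$ acts as a strict Lyapunov function, in line with the local Lyapunov argument of Section~\ref{sec:lyapunov}. The travel time between the ghost and $\mathcal{B}_j$ is finite and of order $\log$-scale, negligible next to $\pi/\sqrt{\mu}$ for small $\mu$, so the total arrival time is governed by $T_{\mathrm{esc}}$. Finally, the restriction to trajectories starting near $\mathbf{z}_s$ is essential. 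Points far from the ghost channel lie outside the center-manifold neighborhood, their exits are not controlled by this argument, and so they may follow other escape routes, as the statement allows.
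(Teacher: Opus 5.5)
Your argument is correct and follows the same route as the paper. The paper gives no separate proof of this proposition in Appendix~\ref{app:proofs}; it justifies the statement only by the chain sketched right after it: saddle-node annihilation of $\mathbf{z}_s,\mathbf{z}_u$ into a ghost, dwell time from Lemma~\ref{lem:dwell}, escape along the former saddle's unstable manifold, and settling in $\mathcal{B}_j$. Your center-manifold reduction, continuous-dependence step and Lyapunov capture make that sketch rigorous.

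Four points should be stated explicitly. First, the argument is local in the parameter: both $\mathbf{c}_t$ and $\mathbf{c}'_t$ must be near $\mathbf{c}^*$ so that $\mathbf{z}_s,\mathbf{z}_u\in N$. Second, since the saddle no longer exists at $\mathbf{c}'_t$, the Phase~2 comparison is cleanest when routed through the unique unstable-side center branch of the saddle-node at $\mathbf{c}^*$. Third, rather than invoking continuation of the whole basin, which hyperbolicity of the sink does not give, the orbit should land in a small forward-invariant sublevel set around the sink of $\mathcal{B}_j$. Fourth, with drift present, transverse contraction comes from the Jacobian of $-\nabla_\mathbf{z} U_\theta+\mathbf{b}_\theta$, not from the Hessian of $U_\theta$.
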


Proposition~\ref{prop:switching} establishes that behavioral transitions are mediated by topological changes in the potential landscape, not by accumulating new observations. The modulatory signal $\mathbf{c}_t$ acts as a bifurcation parameter: when it crosses a critical threshold, a basin-saddle pair annihilates and converts a stable attractor into a ghost region (Fig.~\ref{fig:ghost_mechanism}). The trajectory then escapes along the unstable manifold into a neighboring basin. This differs from in-context learning, where transitions require enough new tokens to overwrite prior context.

\begin{corollary}[Combinatorial expressivity]
\label{cor:combinatorial}
Let $U_\theta$ admit $K$ distinct stable basins $\{\mathcal{B}_1, \dots, \mathcal{B}_K\}$. If for each ordered pair $(\mathcal{B}_i, \mathcal{B}_j)$ there exists a modulation path satisfying Proposition~\ref{prop:switching}, then distinguishable behavioral trajectories number $\Omega(K!)$ for trajectories visiting all basins, and $\Omega(2^K)$ for trajectories visiting any subset. (Proof in Appendix~\ref{app:proofs}.)
\end{corollary}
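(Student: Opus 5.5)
The plan is to reduce the corollary to a counting argument over basin orderings. First I fix what a ``distinguishable behavioral trajectory'' is: a finite sequence of basins $(\mathcal{B}_{\sigma(1)},\mathcal{B}_{\sigma(2)},\dots,\mathcal{B}_{\sigma(m)})$ with consecutive entries distinct. The trajectory is realized by piecewise-constant modulation: on each segment $\mathbf{c}_t$ is held at a value under which the current basin is stable. At each switching instant $\mathbf{c}_t$ jumps along the modulation path associated with the ordered pair $(\mathcal{B}_{\sigma(k)},\mathcal{B}_{\sigma(k+1)})$. Two such trajectories are distinguishable when their basin sequences differ. Since the basins are distinct stable regimes with distinct action profiles (property~(i)), distinct sequences give distinct decoded action profiles $\boldsymbol{\pi}_\theta(\mathbf{z}_t)$.

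\textbf{Realizability.} I argue by induction on $m$ that every admissible sequence is realized. The inductive hypothesis is that after segment $k$ the state lies in a neighborhood of the equilibrium $\mathbf{z}_s$ of $\mathcal{B}_{\sigma(k)}$. For the base case, while $\mathbf{c}_t$ is held fixed the potential decreases along trajectories ($\dot U_\theta\le 0$, Step~1), and near the minimum the local Lyapunov argument forces convergence to $\mathbf{z}_s$; the dwell segment can be made long enough that the state enters any prescribed neighborhood. For the inductive step, the hypothesis of the corollary supplies a modulation satisfying Proposition~\ref{prop:switching} for the pair $(\sigma(k),\sigma(k+1))$. The proposition then gives arrival in $\mathcal{B}_{\sigma(k+1)}$ within time $T_{\mathrm{esc}}$, which is finite by Lemma~\ref{lem:dwell}. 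Restoring a context under which $\mathcal{B}_{\sigma(k+1)}$ is a stable basin and waiting re-establishes the neighborhood condition, which closes the induction. This step is the main obstacle. Proposition~\ref{prop:switching} only guarantees transit from a neighborhood of $\mathbf{z}_s$, so the convergence-between-switches argument must make that neighborhood contain the post-transit state. I would handle this by letting the dwell time on each segment be chosen adaptively rather than fixed. I would also state explicitly the implicit assumption that the stable-context values for each basin are mutually compatible, meaning that returning to a stable context does not destroy the target basin.

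\textbf{Counting.} Trajectories visiting all $K$ basins exactly once correspond injectively to permutations $\sigma\in S_K$. Every consecutive pair is an ordered pair of distinct basins and hence admissible, which gives at least $K!$ distinct trajectories, that is, $\Omega(K!)$. For the subset bound, fix a starting basin. Each subset $S\subseteq\{1,\dots,K\}$ containing it can be visited in, say, increasing index order, and distinct subsets give distinct visited sets. This yields at least $2^{K-1}$ trajectories. Allowing an arbitrary start gives $2^K-1$ nonempty subsets. Either way the count is $\Omega(2^K)$, which completes the proof. Counting all orderings of all subsets would give the stronger bound $\sum_{m}\binom{K}{m}m!$, and I would note this as a remark only.
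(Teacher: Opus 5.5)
Your proposal is correct and is essentially the paper's proof: trajectories are identified with basin sequences, the all-pairs hypothesis makes every ordering of every subset realizable through Proposition~\ref{prop:switching}, and counting gives $K!$ full permutations and at least $2^K$ subset-based sequences (the paper counts all ordered subsets, $\sum_{m=0}^{K} K!/(K-m)! \ge 2^K$, and attributes distinguishability to differing dwell times and transition dynamics rather than to distinct per-basin action profiles). Your realizability induction, including the basin-compatibility assumption you flag, spells out a step the paper only asserts; one correction is that with drift present the global monotonicity $\dot U_\theta \le 0$ from Step~1 need not hold, so the dwell phase should rest only on the local Lyapunov condition $\|\mathbf{b}_\theta\| < \|\nabla_\mathbf{z} U_\theta\|$ near the basin minimum.
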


\begin{lemma}[Bounded transition]
\label{lem:bounded}
Assume $U_\theta$ is smooth in $\mathbf{c}_t$, so that the set of bifurcation values is closed and isolated. Let $\mathbf{c}_t$ lie inside the active basin $\mathcal{B}_i$, and for each neighboring basin $\mathcal{B}_j$, let $d_{ij}$ denote the distance in $\mathbf{c}_t$-space from $\mathbf{c}_t$ to the saddle-node bifurcation that annihilates $\mathcal{B}_i$ toward $\mathcal{B}_j$. Let $d_{\min}^{(2)}$ be the second-smallest of $\{d_{ij}\}_j$. If $\|\Delta \mathbf{c}_t\| < d_{\min}^{(2)}$, then the modulated $\mathbf{c}_t + \Delta \mathbf{c}_t$ can cross at most the nearest bifurcation; hence at most one basin can undergo bifurcation per modulation event, guaranteeing stability against cascading transitions.
\end{lemma}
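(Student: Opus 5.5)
The argument is a distance comparison in $\mathbf{c}$-space. I would model each candidate transition $\mathcal{B}_i \to \mathcal{B}_j$ by its bifurcation set $\Sigma_{ij} \subset \mathbb{R}^{d_c}$: the parameter values at which the saddle-node annihilating $\mathcal{B}_i$ toward $\mathcal{B}_j$ occurs. By hypothesis $U_\theta$ is smooth in $\mathbf{c}_t$, so each $\Sigma_{ij}$ is closed and the bifurcation values are isolated. Consequently $d_{ij} = \mathrm{dist}(\mathbf{c}_t, \Sigma_{ij})$ is attained and strictly positive, because $\mathbf{c}_t$ lies in the interior of the regime where $\mathcal{B}_i$ is active. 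I would then order the distances as $d_{(1)} \le d_{(2)} \le \cdots$, so that $d_{\min}^{(2)} = d_{(2)}$.

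The central step is a reachability claim. I would interpret "the modulation event crosses $\Sigma_{ij}$" to mean that the straight segment $\{\mathbf{c}_t + s\Delta\mathbf{c}_t : s \in [0,1]\}$ meets $\Sigma_{ij}$. The segment stays in the closed ball $\bar{B}(\mathbf{c}_t, \|\Delta\mathbf{c}_t\|)$. If it meets $\Sigma_{ij}$ at a point $\mathbf{c}^\ast$, then
\begin{equation*}
d_{ij} \le \|\mathbf{c}^\ast - \mathbf{c}_t\| \le \|\Delta\mathbf{c}_t\| < d_{\min}^{(2)}.
\end{equation*}
At most one index can satisfy $d_{ij} < d_{(2)}$, namely the minimizer $j^\ast$ attaining $d_{(1)}$, and only when $d_{(1)} < d_{(2)}$ strictly. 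Every other bifurcation set therefore lies outside the ball and is not crossed. This establishes that at most one basin-annihilating bifurcation is encountered per modulation event.

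The remaining work is to rule out a cascade. The concern is that after $\mathcal{B}_i$ is annihilated and the trajectory escapes to $\mathcal{B}_{j^\ast}$ by Proposition~\ref{prop:switching}, the same $\Delta\mathbf{c}_t$ might then destroy $\mathcal{B}_{j^\ast}$. My approach would be to include in the family $\{d_{ij}\}$ every bifurcation set reachable from $\mathbf{c}_t$, not only those that annihilate $\mathcal{B}_i$. Any further annihilation would then require a second bifurcation set within distance $\|\Delta\mathbf{c}_t\|$, which the bound above excludes. I would also use that $\mathbf{c}$ is held fixed after the modulation, as in Proposition~\ref{prop:switching}. Together these mean the landscape undergoes at most one topological change, and the trajectory settles into a basin that persists.

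I expect the main obstacle to be the tie case and the path interpretation. If $d_{(1)} = d_{(2)}$, then $\|\Delta\mathbf{c}_t\| < d_{(2)}$ already prevents any crossing, so the statement holds trivially there. If the modulation is a curved path in $\mathbf{c}$-space rather than a straight jump, the ball argument needs the whole path to stay within radius $\|\Delta\mathbf{c}_t\|$ of $\mathbf{c}_t$. I would state this as an explicit assumption, that the path length is at most $\|\Delta\mathbf{c}_t\|$. Finally, I would note that the lemma is a local geometric statement about $\mathbf{c}$-space. It needs no dynamical estimates beyond the closedness of the bifurcation sets.
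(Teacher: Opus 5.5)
Your argument is the same distance comparison the paper relies on. The paper gives no separate proof of Lemma~\ref{lem:bounded}; its only justification is the inline remark that a perturbation of norm below $d_{\min}^{(2)}$ can reach at most the nearest bifurcation, and your inequality $d_{ij} \le \|\Delta\mathbf{c}_t\| < d_{\min}^{(2)}$ makes that precise, tie case included. Your enlargement of the family $\{d_{ij}\}$ to all bifurcation sets for the anti-cascade claim goes beyond the paper and is genuinely needed: the stated $d_{ij}$ only control bifurcations of $\mathcal{B}_i$, so they cannot rule out that $\Delta\mathbf{c}_t$ also annihilates $\mathcal{B}_{j^\ast}$ or another basin.
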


\subsection{Lyapunov Stability and the Stability-Switching Trade-off}
\label{sec:lyapunov}

The potential $V(\mathbf{z}) = U_\theta(\mathbf{z}; \mathbf{c}_t)$ (evaluated at a fixed modulation value $\mathbf{c}_t$) serves as a local Lyapunov function near any basin minimum $\mathbf{z}^*$. The time derivative along trajectories of Eq.~\eqref{eq:derived_slow} is $\dot{V} = -\|\nabla_\mathbf{z} U_\theta\|^2 + \nabla_\mathbf{z} U_\theta \cdot \mathbf{b}_\theta$, which is strictly negative throughout a punctured neighborhood of $\mathbf{z}^*$ whenever $\|\mathbf{b}_\theta\| < \|\nabla_\mathbf{z} U_\theta\|$ for $\mathbf{z} \neq \mathbf{z}^*$ (with $\dot{V} = 0$ only at the equilibrium itself). This condition governs the stability-switching trade-off: deep basins (large $\|\nabla U\|$) are highly stable but require stronger modulation to escape; shallow basins switch readily but are more susceptible to perturbations. When modulation flattens the barrier ($\|\nabla U\| \to 0$), the drift overcomes the weakened gradient and initiates escape. The Lyapunov condition $\nabla U_\theta(\mathbf{z}^*) = 0$ at converged states is automatically satisfied by the gradient-flow architecture itself: trajectories of $\dot{\mathbf{z}} = -\nabla U + \mathbf{b}$ converge to local extrema of $U$ when $\mathbf{b}$ is small relative to $\nabla U$, with no explicit loss required (Section~\ref{sec:method}; ablation in Section~\ref{subsec:exp_offline}).

\section{Architecture and Training}
\label{sec:method}

Following the theoretical derivation above, the proposed architecture is instantiated as a neural decoder module that drops into existing sequential generation pipelines. Fig.~\ref{fig:method} gives an overview.

\begin{figure}[!t]
\centering
\includegraphics[width=\columnwidth]{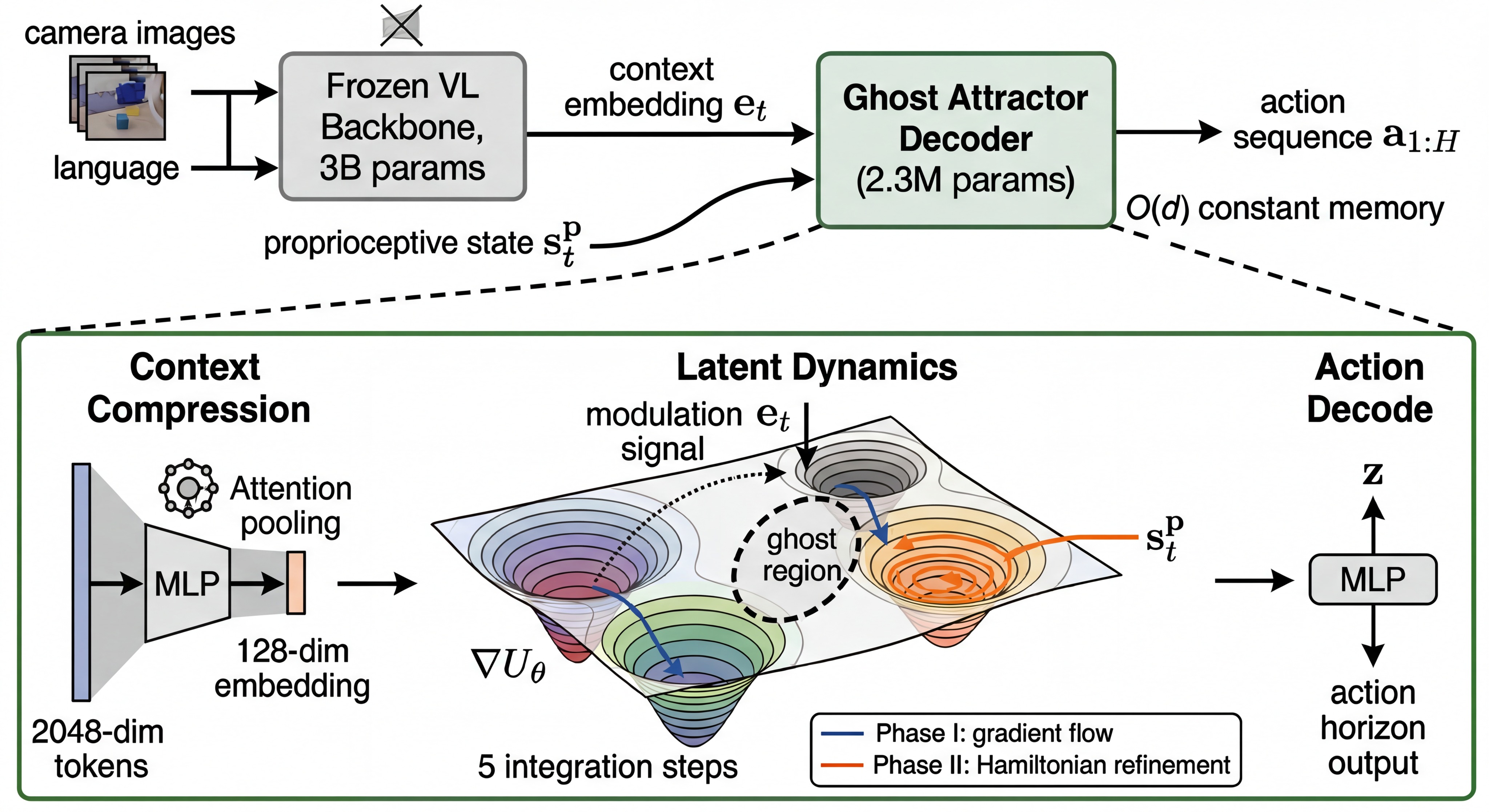}
\caption{Architecture overview. \textbf{Top:} Frozen backbone produces context $\mathbf{e}_t$; with proprioception $\mathbf{s}_t^p$, the Ghost decoder (2.3M) generates actions in $\mathcal{O}(d_z)$ memory. \textbf{Bottom:} Latent dynamics evolve on $U_\theta$ via Phase~I (blue, basin convergence) and Phase~II (orange, proprioceptive refinement). Ghost regions mediate basin transitions.}
\label{fig:method}
\end{figure}

\subsection{Integration with Existing Architectures}

Large-scale sequential generation models share a pipeline: pretrained encoders convert inputs into token sequences, and a temporal-context decoder converts these tokens into output sequences. The proposed dynamical decoder replaces only the temporal-context block. The multimodal tokens (visual $\mathbf{T}^v_t$, proprioceptive $\mathbf{T}^p_t$, language $\mathbf{T}^l$, and an optional task-conditioning vector $\boldsymbol{\tau}_t$) are fused into a fixed-dimensional context embedding $\mathbf{e}_t \in \mathbb{R}^{d_e}$:
\begin{equation}
\mathbf{e}_t = \text{MLP}_{\mathrm{fuse}}([\text{pool}(\mathbf{T}^v_t); \mathbf{T}^p_t; \text{pool}(\mathbf{T}^l); \boldsymbol{\tau}_t]),
\end{equation}
replacing quadratic self-attention with a constant-time operation. The output $\mathbf{e}_t$ instantiates the theoretical modulatory signal $\mathbf{c}_t$ (with $d_e$ playing the role of $d_c$) and reshapes the potential landscape at every timestep.

\subsection{Neural Instantiation of the Dynamical System}

The theoretical slow dynamics (Eq.~\eqref{eq:derived_slow}) are instantiated with concrete neural network components: $U_\theta: \mathbb{R}^{d_z} \times \mathbb{R}^{d_e} \to \mathbb{R}$ is a scalar-valued MLP (the potential), $\mathbf{b}_\theta: \mathbb{R}^{d_z} \times \mathbb{R}^{d_e} \to \mathbb{R}^{d_z}$ is a vector-valued MLP (the drift), and $\mathbf{c}_t$ is instantiated as $\mathbf{e}_t$ (Section~\ref{sec:method}.A). An initial state encoder $\mathbf{E}_\theta: \mathbb{R}^{d_s} \to \mathbb{R}^{d_z}$ maps the proprioceptive state $\mathbf{s}_t^p$ to the starting latent $\mathbf{z}^{(0)} = \mathbf{E}_\theta(\mathbf{s}_t^p)$. The fast variable $\mathbf{y}$ from the theoretical fast-slow decomposition is not explicitly integrated in the architecture; the singular-perturbation property ($\varepsilon \ll 1$) is exploited to reduce dynamics to the slow manifold, and the per-timestep output is recovered via a learned readout (Section~\ref{sec:action_decoding}). Unlike neural ODEs~\cite{chen2018neuralode} that learn unconstrained vector fields, the potential structure constrains the dynamics to the form derived in Section~\ref{sec:theory}, where basin existence and ghost-attractor switching are built into the architecture.

\subsection{Hierarchical Phase-Space Integration}

Both phases operate within the potential-drift framework derived in Section~\ref{sec:theory}, but at different dynamical orders suited to different gradient regimes. Because $\varepsilon \ll 1$, the fast variable $\mathbf{y}$ relaxes to $\mathbf{h}_\theta(\mathbf{z}; \mathbf{e}_t)$ on a timescale much shorter than the integration step size, so both phases evolve only the slow variable $\mathbf{z}$. The fast-slow separation justifies a decoupled design: integration determines the basin (via $\mathbf{z}$), and a separate learned mapping produces the output (Section~\ref{sec:action_decoding}).

Phase I (Basin convergence, first-order dynamics): The first $n_1$ steps discretize the slow dynamics (Eq.~\eqref{eq:derived_slow}, with $\mathbf{c}_t = \mathbf{e}_t$) via explicit Euler, starting from $\mathbf{z}^{(0)} = \mathbf{E}_\theta(\mathbf{s}_t^p)$:
\begin{equation}
\mathbf{z}^{(k+1)} = \mathbf{z}^{(k)} + \Delta t_1 \bigl(-\nabla_\mathbf{z} U_\theta(\mathbf{z}^{(k)}; \mathbf{e}_t) + \mathbf{b}_\theta(\mathbf{z}^{(k)}; \mathbf{e}_t)\bigr),
\label{eq:phase1}
\end{equation}
where $\Delta t_1$ is the Phase~I Euler step size (instantiating the generic $\Delta t$ in Eq.~\eqref{eq:full_dynamics}). First-order (overdamped) dynamics are efficient when the potential gradient is strong, driving $\mathbf{z}$ rapidly toward the appropriate basin.

Phase II (Proprioceptive refinement, second-order dynamics): After Phase~I, $\mathbf{z}$ is near a basin floor where $\nabla_\mathbf{z} U_\theta \approx \mathbf{0}$. First-order dynamics stall in this regime because the driving force vanishes. The integrator is therefore promoted to second-order dissipative Hamiltonian dynamics $\dot{\mathbf{z}} = \mathbf{p}/m, \,\dot{\mathbf{p}} = -\nabla_\mathbf{z} V_\theta - \eta\,\mathbf{p} + \boldsymbol{\phi}_\theta$, introducing momentum $\mathbf{p} \in \mathbb{R}^{d_z}$ that allows continued refinement through inertia. The modulatory signal is enriched with proprioceptive feedback, $\mathbf{c}_t \to \boldsymbol{\xi}_t := (\mathbf{e}_t, \mathbf{s}_t^p)$ where $\mathbf{s}_t^p \in \mathbb{R}^{d_s}$, reshaping the local basin geometry for precision. In discrete form, the remaining $n_2$ steps evolve $(\mathbf{z}, \mathbf{p})$ as:
\begin{align}
\mathbf{p}^{(k+1)} &= \mathbf{p}^{(k)} + \Delta t_2 \bigl(-\nabla_\mathbf{z} V_\theta - \eta\, \mathbf{p}^{(k)} + \boldsymbol{\phi}_\theta\bigr), \label{eq:phase2_p} \\
\mathbf{z}^{(k+1)} &= \mathbf{z}^{(k)} + \Delta t_2 \, \mathbf{p}^{(k+1)}/m,
\label{eq:phase2_z}
\end{align}
where both $V_\theta$ and $\boldsymbol{\phi}_\theta$ are evaluated at $(\mathbf{z}^{(k)}; \boldsymbol{\xi}_t)$. Here $V_\theta: \mathbb{R}^{d_z} \times \mathbb{R}^{d_e} \times \mathbb{R}^{d_s} \to \mathbb{R}$ is a refinement potential (a separate MLP from $U_\theta$, shaping local geometry around each basin rather than global basin topology), $\boldsymbol{\phi}_\theta: \mathbb{R}^{d_z} \times \mathbb{R}^{d_e} \times \mathbb{R}^{d_s} \to \mathbb{R}^{d_z}$ is a learned external-force MLP playing the role of drift in the second-order dynamics (analogous to $\mathbf{b}_\theta$ in Phase~I; distinct from the slow-dynamics function $\mathbf{f}_\theta$ in Eq.~\eqref{eq:slow}), $\eta > 0$ is the damping coefficient ensuring convergence, and $m > 0$ is mass. The proprioceptive state $\mathbf{s}_t^p$ is injected directly into $V_\theta$ alongside $\mathbf{e}_t$: while the compressed proprioceptive tokens inside $\mathbf{e}_t$ provide coarse state context for backbone fusion, the raw $\mathbf{s}_t^p$ offers high-fidelity feedback specifically for the precision-sensitive Phase~II refinement. The Phase~II step size $\Delta t_2$ similarly instantiates the generic $\Delta t$. Phase~II uses iteration index $k = 0, \dots, n_2 - 1$ continuing from $k = n_1$ in Phase~I; that is, the Phase~II initial state is the Phase~I terminal state $\mathbf{z}^{(n_1)}$, with momentum $\mathbf{p}^{(n_1)} = \mathbf{0}$ (starting from rest). This second-order extension remains within the potential-drift framework: the same potential structure drives the dynamics, promoted from first-order $\dot{\mathbf{z}} \propto -\nabla V$ to second-order Hamiltonian flow with momentum to handle the vanishing-gradient regime near basin minima~\cite{greydanus2019hamiltonian}. The dissipation term $\eta\mathbf{p}$ guarantees that the refinement converges rather than oscillating indefinitely~\cite{shadmehr2005computational}.

\subsection{Training Objective}

The training objective combines standard behavioral cloning with a single auxiliary contrastive term:
\begin{equation}
\mathcal{L}_{\mathrm{total}} = \underbrace{\|\hat{\mathbf{a}} - \mathbf{a}^*\|^2}_{\mathcal{L}_{\mathrm{BC}}} + \lambda_c \mathcal{L}_{\mathrm{con}},
\label{eq:loss}
\end{equation}
where $\mathbf{a}^*$ is the ground-truth demonstration action and $\mathcal{L}_{\mathrm{con}}$ is an NT-Xent contrastive loss~\cite{chen2020simclr} on the post-integration latent $\mathbf{z}^*$ that pulls same-task states together and pushes different-task states apart. The contrastive term encourages the potential landscape to develop one basin per task. The basin/ghost topology then emerges from two sources: (i)~the architectural inductive bias of Eq.~\eqref{eq:derived_slow}, in which gradient flow on a scalar potential automatically converges to local minima ($\dot{U}_\theta \le 0$), and (ii)~the task-distinct supervision provided by $\mathcal{L}_{\mathrm{BC}}$ and $\mathcal{L}_{\mathrm{con}}$. No explicit topology-shaping loss (e.g., $\|\nabla U(\mathbf{z}^*)\|^2$ or ghost-channel direction supervision) is required. This matches the generic-consequence interpretation in Section~\ref{sec:theory}.A and is verified empirically by the ablation in Section~\ref{subsec:exp_offline}. Training uses Adam optimization with $\lambda_c = 0.2$, learning rate $3 \times 10^{-4}$, batch size 32, and 300 epochs.

\subsection{Action Readout}
\label{sec:action_decoding}

The fast-slow theory motivates a decoupled decoding design: Phase~I and~II determine the converged slow state $\mathbf{z}^{(n)} \in \mathbb{R}^{d_z}$, and a separate learned mapping produces the output. Concretely, the theoretical action map $\boldsymbol{\pi}_\theta$ in Eq.~\eqref{eq:full_dynamics} is realized as the composition $\boldsymbol{\pi}_\theta = \text{MLP}_{\mathrm{decode}} \circ \text{MLP}_{\mathrm{horizon}}$, applied per horizon position. Inspired by the manifold projection $\mathbf{h}_\theta$ that maps the slow variable to the fast variable in the theoretical formulation, a horizon network is used to expand the converged latent into $H$ per-timestep intermediate representations:
\begin{equation}
\mathbf{u}_{1:H} = \text{MLP}_{\mathrm{horizon}}(\mathbf{z}^{(n)}) \in \mathbb{R}^{H \times d_z}.
\end{equation}
Each $\mathbf{u}_h \in \mathbb{R}^{d_z}$ is decoded per timestep:
\begin{equation}
\hat{\mathbf{a}}_h = \text{MLP}_{\mathrm{decode}}(\mathbf{u}_h) \in \mathbb{R}^{d_a}, \quad h = 1, \dots, H,
\end{equation}
producing $H$ future action vectors $\hat{\mathbf{a}}_h$ of action dimension $d_a$ simultaneously. Note that $\mathbf{u}_h$ differs from the theoretical fast variable $\mathbf{y}$: while $\mathbf{y}$ is the instantaneous manifold state at time $t$, $\mathbf{u}_h$ indexes a future horizon position. The horizon network can be viewed as a learned temporal unfolding of the manifold projection tailored to action chunking. In the experiments, $d_z = 64$, $H = 17$ (matching the GR00T N1.5 action chunk length), and $d_a = 32$ (17 real action dimensions padded to 32).

The complete architecture consists of: (i) a VL context compressor (two-layer MLP, $2048 \to 512 \to 128$) with attention pooling over VL tokens; (ii) a state encoder $\mathbf{E}_\theta$ (two-layer MLP, $64 \to 512 \to 64$) mapping proprioceptive state $\mathbf{s}_t^p$ to the initial latent $\mathbf{z}^{(0)}$; (iii) a potential network $U_\theta$ (three-layer MLP, $192 \to 512 \to 512 \to 1$, where 192 = 128 context + 64 latent); (iv) a drift network $\mathbf{b}_\theta$ (two-layer MLP, $192 \to 512 \to 64$); (v) a horizon network (two-layer MLP, $64 \to 512 \to H \times 64$); and (vi) an action decoder $\text{MLP}_{\mathrm{decode}}$ (two-layer MLP, $64 \to 512 \to 32$). Total parameter count: 2,315,746. The architecture achieves $\mathcal{O}(d_z)$ memory and $\mathcal{O}(n \cdot d_z \cdot d_h)$ compute per control step (where $n$ is the number of integration steps and $d_h$ the hidden width), independent of horizon length, in contrast to the $\mathcal{O}(t)$ scaling of Transformer decoders. This property is quantified empirically in Section~\ref{sec:results}.

Table~\ref{tab:theory_method_map} summarizes the correspondence between theoretical constructs of Section~\ref{sec:theory} and the architectural components introduced above.

\begin{table}[!t]
\centering
\caption{Mapping from theoretical constructs to architectural components.}
\label{tab:theory_method_map}
\setlength{\tabcolsep}{4pt}
\begin{tabular}{@{}lll@{}}
\toprule
Theory & Equation / Result & Architecture \\
\midrule
Modulatory signal $\mathbf{c}_t$ & Eq.~\eqref{eq:full_dynamics} & Cross-modal fusion $\mathbf{e}_t$ \\
Field $\mathbf{F}_\theta$ & Eq.~\eqref{eq:full_dynamics} & Potential + drift MLPs \\
Slow variable $\mathbf{z}_t$ & Eq.~\eqref{eq:slow} & Latent state ($d_z{=}64$) \\
Fast variable $\mathbf{y}_t$ & Eq.~\eqref{eq:fast} & Horizon unfolding (\S\ref{sec:action_decoding}) \\
Potential $U_\theta$ & Eq.~\eqref{eq:derived_slow} & Scalar MLP \\
Drift $\mathbf{b}_\theta$ & Eq.~\eqref{eq:derived_slow} & Vector MLP \\
Ghost regions & Lemma~\ref{lem:dwell} & Emergent under $\mathbf{e}_t$ modulation \\
Refinement potential $V_\theta$ & Eqs.~\eqref{eq:phase2_p}--\eqref{eq:phase2_z} & Phase~II second-order MLP \\
\bottomrule
\end{tabular}
\end{table}

\section{Experiments}
\label{sec:results}

\subsection{Application Domain and Datasets}

Ghost is evaluated on real-time robot action decoding, where a pretrained vision-language backbone produces context embeddings and a decoder must generate multi-step motor commands at high frequency. Among sequential decoding domains, robotics is chosen as the focus because it stresses all three properties the architecture targets: closed-loop control with environment feedback, naturally multi-phase modal structure (approach/grasp/lift/place sub-phases), and hard real-time and memory constraints from physical deployment. Adjacent domains exercise only subsets of these conditions: language generation lacks the closed-loop feedback, and offline planning lacks the real-time budget. This makes robotics a particularly stringent test bed. The output space is mixed continuous-discrete, multi-modal, and requires instant switching between qualitatively different behavioral modes.

Three evaluation settings of increasing complexity are used. (i)~Synthetic environments (2D potential landscapes and goal-conditioned switching tasks) validate theoretical predictions independently of real-world confounds (Section~\ref{subsec:exp_theory}). (ii)~An in-house dual-arm humanoid manipulation dataset covering 10 tasks across five scene types: cup arrangement, conveyor-belt part sorting, counter and conference-room tidying, toy / tool / book organization, store cleaning, and toast preparation. This is the same 2{,}196-episode dataset used to fine-tune GR00T N1.5. Section~\ref{subsec:exp_offline} draws two splits from it: an episode-level split (1{,}800 train / 396 unseen episodes) for the Diffusion-Transformer drop-in replacement, and a per-task sample-level split (300--600 train / 80--150 test samples per task; 5{,}600 train + 1{,}430 test = 7{,}030 samples in total) for the multi-task and ablation experiments. All offline experiments use cached features from the GR00T N1.5 foundation model~\cite{bjorck2025groot}, a 3B-parameter vision-language-action backbone; this isolates the decoder as the experimental variable. (iii)~LIBERO-10~\cite{liu2024libero}, a standard closed-loop benchmark with 10 tabletop manipulation tasks, evaluates Ghost as an action decoder in realistic robotic deployment, with success rate as the primary metric (Section~\ref{subsec:exp_libero}).

The switching mechanism, central to the paper's claims, is probed at four progressively realistic scales: synthetic potential dynamics (Section~\ref{subsec:exp_theory}), goal-conditioned meta-RL (Section~\ref{subsec:exp_theory}), multi-task pairwise transitions on real-world robot data (Section~\ref{subsec:exp_offline}), and closed-loop language-instructed switching (Section~\ref{subsec:exp_libero}).

\subsection{Theoretical Validation}
\label{subsec:exp_theory}

Ghost's theoretical properties are first validated on diagnostic and synthetic environments before evaluation on real robotic systems.

\paragraph{Diagnostic Visualization of Attractor Dynamics}

To validate theoretical predictions, a diagnostic environment is constructed with $K = 3$ behavioral modes in a 2D latent space, allowing direct visualization. Three modes correspond to distinct velocity targets. The potential is parameterized as modulated Gaussian wells with quadratic confinement:
\begin{equation}
U(\mathbf{z}; \mathbf{c}) = \alpha \|\mathbf{z}\|^2 - \sum_{k=1}^{3} w_k(\mathbf{c}) \exp\!\left(-\frac{\|\mathbf{z} - \boldsymbol{\mu}_k\|^2}{2\sigma^2}\right)
\label{eq:analytic_potential}
\end{equation}

Fig.~\ref{fig:diagnostic}(a) shows the potential with all basins active: three wells separated by ridges. Fig.~\ref{fig:diagnostic}(b) shows the landscape after collapsing basin $\mathcal{B}_1$ into a ghost region: gradient arrows near-zero, confirming $\|\nabla_\mathbf{z} U\| \le \delta$. A trajectory initialized near the collapsed basin dwells approximately 3.5~s before escaping to basin $\mathcal{B}_2$ (Fig.~\ref{fig:diagnostic}(c)), consistent with the dwell-time scaling of Lemma~\ref{lem:dwell} and the ghost-mediated switching mechanism of Proposition~\ref{prop:switching}. Action signals exhibit the characteristic ghost-attractor profile: stable output, delay followed by rapid switch, then re-stabilization without oscillation.

To check whether training recovers multi-basin topology from data alone, a neural potential network (3-layer MLP, 128-128-64 hidden units, SiLU activations) is trained on behavior-cloning data generated by the analytic system. The training procedure uses only the standard behavioral cloning loss $\mathcal{L}_{\mathrm{BC}} = \|\hat{\mathbf{a}} - \mathbf{a}^*\|^2$ augmented with the slow-manifold and ghost-structure regularizers described in Section~\ref{sec:method}; no supervision is placed on the shape of the potential landscape itself.

The learned network reaches a final loss of 0.003 and produces abrupt switching within two control steps once the context is modulated. This experiment confirms a load-bearing assumption of the theoretical framework: the topology-shaping training objectives (Section~\ref{sec:method}) are sufficient to induce multi-basin potential structure from behavioral data alone, without specifying attractor locations or transition paths.

\begin{figure*}[!t]
\centering
\begin{tabular}{@{}cccc@{}}
\includegraphics[width=0.232\textwidth]{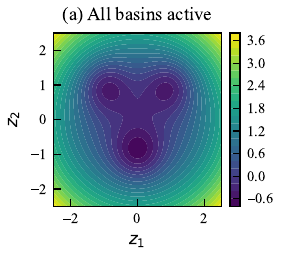} &
\includegraphics[width=0.232\textwidth]{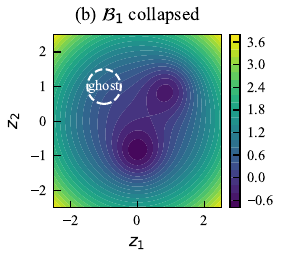} &
\includegraphics[width=0.232\textwidth]{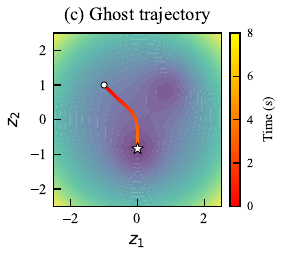} &
\includegraphics[width=0.232\textwidth]{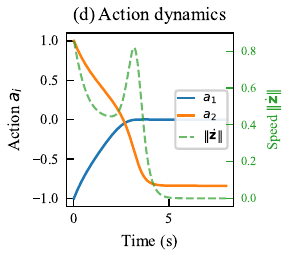} \\
\end{tabular}
\caption{Diagnostic visualization in 2D latent space, $K = 3$ modes. (a) Potential with all basins active. (b) Modulated landscape: $\mathcal{B}_1$ collapsed into ghost region (dashed). (c) Trajectory escaping the ghost remnant and converging to $\mathcal{B}_2$; color encodes time. (d) Action signals (left) and latent speed $\|\dot{\mathbf{z}}\|$ (right): dip at $t\!\approx\!1.5$s + peak at $t\!\approx\!3$s is the dwell--escape signature.}
\label{fig:diagnostic}
\end{figure*}

\paragraph{Switching Speed: Ghost vs Recurrent vs Feedforward}
To quantify the switching advantage of potential-based dynamics, convergence speed after an abrupt context switch is compared on a tilted quartic double-well potential:
\begin{equation}
U(z_1; c) = 0.3(z_1^2 - d^2)^2 - 3(2c-1) z_1
\end{equation}
where $d = 1.5$ and $c \in \{0, 1\}$ selects the active basin. At $t = T_\text{switch}$, the context switches from $c = 0$ (Basin~A at $z_1 \approx -1.9$) to $c = 1$ (Basin~B at $z_1 \approx +1.9$), and the number of steps to converge within distance 0.15 of the new basin center is measured.

Ghost ODE integration ($\mathrm{d}\mathbf{z}/\mathrm{d}t = -\nabla U$) converges in 1--3 steps depending on integration depth ($n = 10$: 1 step; $n = 5$: 2 steps; $n = 3$: 3 steps). A GRU-like recurrent update $\mathbf{h}_t = (1 - \alpha)\mathbf{h}_{t-1} + \alpha \boldsymbol{\mu}_B$ requires $>$20 steps even at $\alpha = 0.3$, because convergence is geometric with constant contraction ratio $(1 - \alpha)$.

The mechanism difference is revealed by the contraction ratio $d_t / d_{t-1}$, where $d_t$ is the distance to the target basin after step $t$. Ghost exhibits an \emph{accelerating} regime: the ratio decreases from $\approx 0.85$ at step~1 to a minimum of $\approx 0.25$ at steps~4--5, while GRU maintains a constant ratio of $\approx 0.72$ throughout. The non-monotone shape of Ghost's contraction profile reflects the gradient field structure on a quartic potential: contraction is fastest in the steep region between the saddle and the target basin, slowing once the trajectory enters the basin's quadratic neighborhood (Fig.~\ref{fig:switch_speed}b).

\begin{figure}[!t]
\centering
\includegraphics[width=\columnwidth]{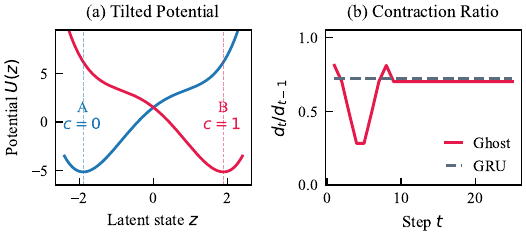}
\caption{Switching speed on a synthetic quartic double-well potential. (a) Tilted landscape under contexts $c\!=\!0$ and $c\!=\!1$; the global minimum moves Basin~A$\to$Basin~B. (b) Contraction ratio $d_t / d_{t-1}$: Ghost shows accelerating convergence near the basin; GRU maintains constant geometric decay ($\approx 0.72$).}
\label{fig:switch_speed}
\end{figure}

\paragraph{Goal-Conditioned Switching Benchmarks}

Evaluation uses a goal-conditioned switching benchmark in which goal locations change mid-episode, with Algorithm Distillation (AD)~\cite{laskin2022incontext}, a Transformer baseline, and RL$^2$~\cite{duan2016rl2} as comparisons.

The Ghost decoder achieves switching latency of $10.8 \pm 3.4$ steps, compared to $66.0 \pm 65.2$ for Transformer baseline, $170.0 \pm 72.7$ for AD, and $219.4 \pm 12.0$ for RL$^2$ (Table~\ref{tab:meta_rl}). This represents 6x improvement over the nearest competitor, confirming that ghost-attractor dynamics enable transitions at control-loop timescales rather than requiring context accumulation.

Steady-state reward of 1.00 exceeds all baselines (Transformer: 0.31, AD: 0.06, RL$^2$: -0.08), and total episode reward of 281.5 exceeds Transformer by 4x. With 17,922 parameters compared to 417,538 for Transformer, Ghost achieves a $23\times$ parameter reduction while attaining the highest reward in Table~\ref{tab:meta_rl}.

On Meta-World ML10, a standard multi-task benchmark with 10 manipulation tasks, all context-conditioned agents change action output within 1--2 steps of task switch since they receive task identifier as input. However, Ghost achieves 0.6--0.8 cosine alignment with new-task expert from the very first step, while Transformer and AD produce negatively aligned actions ($-0.4$ to $0.0$). The ghost-attractor advantage is therefore switching \emph{accuracy} rather than switching \emph{speed}. When an agent changes output quickly but in the wrong direction, the adaptation can be counterproductive. The potential landscape ensures that the first post-switch action already lies near the correct basin, rather than requiring iterative context accumulation to identify the new behavioral mode.

The reward adaptation curves (Fig.~\ref{fig:meta_rl}) make the practical consequences visible: Ghost recovers to full performance within roughly 10 steps, Transformer needs 50--100 steps, and AD needs 150--200+ steps. For a robot operating at 100~Hz, 10 steps corresponds to 100~ms of suboptimal behavior compared with 1--2 seconds for the baselines. In time-sensitive applications this difference is operationally non-trivial.

\begin{table}[!t]
\centering
\caption{Goal-conditioned switching benchmark. Switching latency is steps to reach new goal after context switch. All values averaged over 20 episodes.}
\label{tab:meta_rl}
\resizebox{\columnwidth}{!}{%
\begin{tabular}{lcccc}
\toprule
Method & Switch Latency & Steady Reward & Total Reward & Params \\
\midrule
AD & $170.0 \pm 72.7$ & 0.06 & 23.0 & 421,378 \\
Transformer & $66.0 \pm 65.2$ & 0.31 & 70.5 & 417,538 \\
RL$^2$ & $219.4 \pm 12.0$ & -0.08 & -22.4 & 54,533 \\
\midrule
Ghost (proposed) & $10.8 \pm 3.4$ & 1.00 & 281.5 & 17,922 \\
\bottomrule
\end{tabular}}
\end{table}

\begin{figure}[!t]
\centering
\includegraphics[width=\columnwidth]{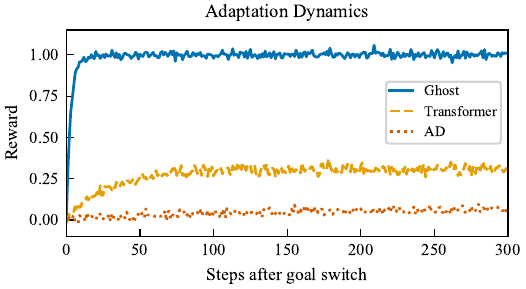}
\caption{Reward adaptation curves following goal switch at step 0. Ghost recovers to $\approx 1.0$ within $\approx 10$ steps, Transformer plateaus near $0.3$, and AD fails to recover ($\approx 0$). Switching latencies and parameter counts are reported in Table~\ref{tab:meta_rl}.}
\label{fig:meta_rl}
\end{figure}

\paragraph{Attractor Emergence in Trained Ghost}
The diagnostic experiment above (Fig.~\ref{fig:diagnostic}) shows that Ghost's potential-drift form admits attractor dynamics in a controlled 2D setting. The same property is now established in a Ghost trained end-to-end on the 10-task data with $\mathcal{L}_{\mathrm{BC}} + \lambda_c \mathcal{L}_{\mathrm{con}}$ (no auxiliary basin-convergence loss; see Section~\ref{subsec:exp_offline}). For each of the 1{,}430 held-out test samples on the 10-task data, the potential gradient norm $\|\nabla_z U(z^{(t)}; c)\|_2$ is traced across the five integration steps. A genuine gradient-flow contraction toward a basin minimum should produce a monotonically decreasing curve approaching the local-minimum condition $\nabla U \to 0$. Fig.~\ref{fig:attractor_emergence} reports results on two trained variants. Under the standard training objective, the gradient norm decays from $1.68$ at $t\!=\!0$ to $0.55$ at $t\!=\!5$ ($67\%$ reduction), matching the gradient-flow contraction predicted by Eq.~\eqref{eq:derived_slow}. The potential gradient is comparable in magnitude to the drift ($\|\nabla U\|/\|b\|=0.62$ at $t\!=\!0$), so the dynamics are jointly shaped by both terms rather than dominated by drift. When training adds a $0.01\|\nabla U(z^*)\|^2$ basin-convergence regularizer, by contrast, the gradient norm collapses to $\approx 0.04$ at every integration step ($\|\nabla U\|/\|b\|=0.015$) with no decay, indicating a flattened potential in which attractor dynamics are no longer empirically observable. The basin term, intended to enforce $\nabla U(z^*)\to 0$ at converged states, instead drives $U$ to a globally flat solution, at which the gradient-flow integrator stops contributing to convergence. Section~\ref{sec:discussion} returns to this empirical sensitivity to the training recipe.

\begin{figure}[!t]
\centering
\includegraphics[width=0.85\columnwidth]{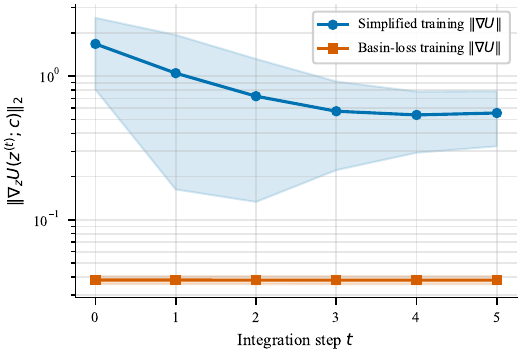}
\caption{Empirical attractor verification on a Ghost trained end-to-end. $\|\nabla_z U(z^{(t)}; c)\|_2$ traced across five integration steps on 1{,}430 held-out samples (mean $\pm$ std). Simplified training ($\mathcal{L}_{\mathrm{BC}}+\lambda_c \mathcal{L}_{\mathrm{con}}$, blue) shows a $67\%$ contraction $1.68\!\to\!0.55$ consistent with gradient-flow convergence; adding a $0.01\|\nabla U(z^*)\|^2$ basin regularizer (orange) flattens the potential to $\|\nabla U\|\approx 0.04$ throughout, suppressing observable attractor dynamics.}
\label{fig:attractor_emergence}
\end{figure}

\subsection{Offline Decoder Comparison on Real-Robot Data}
\label{subsec:exp_offline}

With the theoretical properties validated, Ghost is now evaluated as a practical decoder replacement using cached backbone features from the GR00T N1.5 foundation model on real-world robot demonstration data.

\paragraph{Drop-in Replacement for Billion-Parameter Decoders}

The Ghost decoder is evaluated as a drop-in replacement for the Diffusion Transformer decoder in GR00T N1.5~\cite{bjorck2025groot}. The original model consists of a 3B-parameter vision-language backbone and a 1.07B-parameter DiT decoder. The backbone is frozen and the Ghost decoder is trained on cached features from 1,800 demonstration episodes, using combined 50\% knowledge distillation and 50\% ground-truth supervision. Evaluation is on 200 held-out timesteps from 396 unseen episodes.

The Ghost decoder matches DiT on overall ground-truth MSE (0.116 vs 0.117 across 17 action dimensions; Table~\ref{tab:groot_distill}) while using 462x fewer parameters. The error decomposes asymmetrically across dimension types: DiT remains stronger on the 14 continuous arm joints (0.051 vs 0.105), while Ghost substantially outperforms on the discrete dimensions, including grippers (0.119 vs 0.137) and the lift command (0.263 vs 1.004). The lift dimension is informative: DiT's iterative denoising regresses toward the training-distribution mean, while Ghost's deterministic gradient integration preserves constant signals.

Three observations stand behind Ghost's value here (Table~\ref{tab:groot_distill}). (i)~\emph{Efficiency:} 462$\times$ fewer parameters and 32$\times$ lower inference latency on an NVIDIA RTX 5090 (0.41~ms vs.\ 13~ms for DiT's 4-step Euler denoising). At 100~Hz control, DiT latency consumes the entire time budget; Ghost leaves 9.7~ms for other computation. (ii)~\emph{Structural advantage on discrete dimensions:} Ghost's deterministic gradient integration preserves the discrete gripper and lift signals on which DiT's denoising regresses toward the distribution mean, yielding 1.2$\times$--3.8$\times$ MSE reductions on those commands. (iii)~\emph{No accuracy loss:} the discrete-dimension wins more than offset DiT's slight edge on continuous arm joints, leaving the 17-dimension aggregate MSE essentially tied (0.116 vs.\ 0.117). Sample trajectories (Fig.~\ref{fig:e2e_trajectory}) show qualitatively that the 2.3M-parameter Ghost decoder tracks ground-truth demonstrations more closely than the 1.07B-parameter DiT on each of the four representative dimensions shown.

The contrasting behavior on continuous versus discrete dimensions points to an architectural reason. DiT's iterative denoising regresses toward the training-distribution mean on low-variance dimensions: when the lift command is consistently $-1.0$ during demonstrations, DiT predicts values near $0$ (the distribution center), yielding MSE of 1.004. Ghost's deterministic gradient integration preserves the constant signal because the potential landscape encodes a deep, narrow basin at $-1.0$. This suggests that dynamical-system decoders have a structural advantage for output spaces with mixed continuous-discrete semantics, which are common in real-world robot control (joint positions, gripper commands, mode flags).

\begin{table}[!t]
\centering
\caption{Ghost vs. 1.07B-parameter DiT decoder in GR00T N1.5. Both trained on episodes 0--1799; metrics on 200 held-out timesteps from unseen episodes.}
\label{tab:groot_distill}
\begin{tabular}{lccc}
\toprule
Metric & Ghost & DiT & Comparison \\
\midrule
Decoder Params & 2.3 M & 1068.8 M & 462x fewer \\
GT MSE (total) & 0.116 & 0.117 & on par \\
\quad 14 arm joints & 0.105 & 0.051 & DiT 2.1x \\
\quad 2 grippers & 0.119 & 0.137 & 1.2x lower \\
\quad lift command & 0.263 & 1.004 & 3.8x lower \\
GT Cosine Similarity & 0.872 & 0.887 & on par \\
Inference Latency & 0.41 ms & 13 ms & 32x faster \\
\bottomrule
\end{tabular}
\end{table}

\begin{figure}[!t]
\centering
\includegraphics[width=\columnwidth]{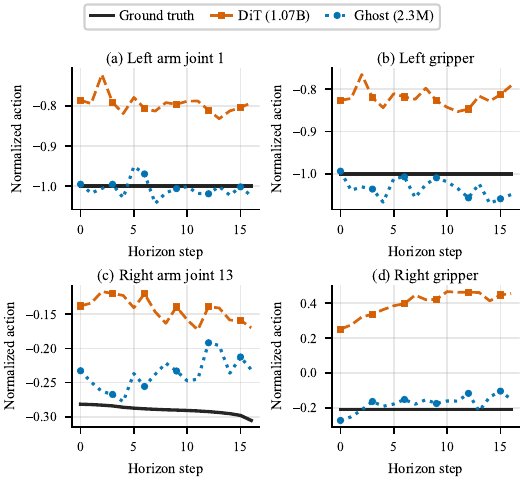}
\caption{Sample action trajectories on four representative dimensions (left-arm joint 1, left gripper, right-arm joint 13, right gripper) from a held-out test episode. Ghost (dotted blue) tracks ground truth (solid black) closely; DiT (dashed orange) is offset by 0.1--0.6 throughout, including a sign error on panel (d).}
\label{fig:e2e_trajectory}
\end{figure}

\paragraph{Multi-Task Behavioral Switching}

The core design premise is whether a single Ghost decoder can serve multiple tasks simultaneously, switching behavior based solely on the context signal. To test this, cached backbone features from ten distinct tasks spanning five scene types are combined: \texttt{Arrange\_cups}, \texttt{Move\_parts} (conveyor-belt parts), \texttt{Stack\_cups}, \texttt{Tidy\_counter}, \texttt{Organize\_toys}, \texttt{Organize\_tools}, \texttt{Take\_book}, \texttt{Tidy\_conf\_room}, \texttt{Clean\_store}, and \texttt{Make\_toast} (300--600 train / 80--150 test samples per task; 5{,}600 training + 1{,}430 test = 7{,}030 samples in total). A single Ghost decoder (2.3M parameters) is trained on combined data with ground-truth supervision, compared against ten task-specific Diffusion Transformer decoders (1.07B each, 10.7B total).

The single Ghost decoder achieves consistently low MSE across every one of the ten tasks (Table~\ref{tab:multi_task}). Overall MSE ranges 0.005--0.038 across tasks, arm-joint MSE 0.001--0.011, and cosine similarity to ground truth 0.94--0.99. The 2.3M-parameter Ghost replaces ten 1.07B-parameter task-specific DiT decoders (10.7B aggregate), a 4{,}614$\times$ structural parameter reduction. A per-task MSE ratio against DiT is not reported here because the cached DiT predictions used in this pipeline were extracted before the policy's inverse-action-normalization step, which inflates DiT's MSE on dimensions with degenerate normalization. This is the empirical setting closest to Corollary~\ref{cor:combinatorial}: a single potential landscape with $K = 10$ basins serving the full task set, consistent with the $\Omega(K!)$ compositional capacity predicted by the corollary.

Switching robustness is evaluated through two intrinsic tests of Ghost's behavior near task boundaries. First, per-sample error decay across 50 random task transitions reveals that Ghost reaches steady-state accuracy within a single sample: boundary-sample MSE is 0.029 versus 0.033 in steady state, a difference within seed-to-seed noise and far smaller than would be expected if the decoder accumulated transition error. The flat boundary-to-steady profile is direct evidence of near-instantaneous basin transitions. Second, varying switching frequency from every 2 to every 10 samples shows Ghost MSE remains stable at approximately 0.031 regardless of rate, demonstrating complete frequency invariance. Both properties, fast convergence and frequency invariance, are intrinsic to Ghost's potential-landscape dynamics; the goal-conditioned meta-RL benchmark (Table~\ref{tab:meta_rl}) provides the explicit comparison against context-conditioned RL baselines (Transformer, AD, RL$^2$), where Ghost achieves 6x lower switching latency than the nearest competitor.

A multi-task t-SNE flow visualization (Fig.~\ref{fig:attractor_basins}) gives direct evidence for the basin-attractor mechanism. Panel~(a) projects all five integration steps $z^{(0)}, \ldots, z^{(5)}$ through a single t-SNE embedding; trajectory lines show latent states from initial proprioceptive encodings (open circles) flowing into ten task-specific clusters (filled scatter) within the five integration steps. Panel~(b) quantifies this convergence per task in the raw 64-dimensional latent space: every one of the ten tasks contracts monotonically, with initial distances $\|z^{(0)}-z^*\|_2$ ranging from $1.72$ (\texttt{Take\_book}) to $3.23$ (\texttt{Arrange\_cups}) and all curves reaching zero at $t=5$. The shared contraction profile across diverse tasks is the multi-task gradient-flow signature implied by Eq.~\eqref{eq:derived_slow}. This dynamical structure is intrinsic to Ghost: a feed-forward MLP baseline at the same parameter budget has no analog of the integration trajectory and produces only a single static latent. The static silhouette of the converged states alone is comparable between Ghost and the MLP control ($\approx 0.20$ on t-SNE 2D for both, $\approx 0.11$ in raw latent), so Ghost's contribution lies in \emph{how} the latent state is reached, namely a contracting flow toward a mode-specific basin, rather than in the static separability of the converged points.

\begin{table}[!t]
\centering
\caption{Per-task Ghost MSE across the 10-task multi-task setting. One 2.3M-parameter Ghost decoder, trained on combined data with GT supervision, achieves consistently low MSE across diverse scene types and action categories.}
\label{tab:multi_task}
\resizebox{\columnwidth}{!}{%
\begin{tabular}{llcc}
\toprule
Task & Overall MSE (17) & 14 arm joints & Cosine sim \\
\midrule
\texttt{Arrange\_cups}     & 0.033 & 0.011 & 0.974 \\
\texttt{Move\_parts}       & 0.014 & 0.003 & 0.985 \\
\texttt{Stack\_cups}       & 0.022 & 0.001 & 0.969 \\
\texttt{Tidy\_counter}     & 0.006 & 0.001 & 0.994 \\
\texttt{Organize\_toys}    & 0.038 & 0.001 & 0.942 \\
\texttt{Organize\_tools}   & 0.011 & 0.001 & 0.986 \\
\texttt{Take\_book}        & 0.005 & 0.001 & 0.994 \\
\texttt{Tidy\_conf\_room}  & 0.026 & 0.001 & 0.963 \\
\texttt{Clean\_store}      & 0.008 & 0.001 & 0.990 \\
\texttt{Make\_toast}       & 0.014 & 0.001 & 0.980 \\
\midrule
\textit{Mean across tasks} & \textit{0.0177} & \textit{0.0021} & \textit{0.978} \\
\bottomrule
\end{tabular}}
\end{table}

\paragraph{Controlled Decoder Architecture Comparison}

The DiT comparison above conflates dynamical-system structure with parameter scale (2.3M vs. 1.07B). To isolate the contribution of the architecture itself, Ghost is compared against five decoders at matched parameter budgets ($\sim$2M): an MLP decoder (direct mapping), a Neural ODE decoder (unconstrained vector field dynamics~\cite{chen2018neuralode}), a Transformer decoder (cross-attention on backbone features, following ACT~\cite{zhao2023act}), a Conditional VAE decoder (latent variable model with learned prior), and a 1-step Diffusion decoder (single-step denoising, following consistency model principles~\cite{prasad2024consistency}). All decoders are trained on identical cached backbone features and data splits.

Ghost employs a basin-aware training strategy that leverages its unique potential landscape: (1)~a contrastive loss on post-integration latent states encourages task-specific basin formation during the first 150 epochs, and (2)~a basin convergence loss $\|\nabla_z U(z^*)\|^2$ pushes converged states toward actual potential minima throughout training. These auxiliary objectives presuppose a differentiable potential function and a basin-structured latent space; decoders without such structure cannot directly use them.

Over five random seeds (Table~\ref{tab:lightweight}), Ghost reaches the lowest MSE with the smallest variance and outperforms every baseline: 5.9\% lower than the strongest competitor CVAE ($p = 0.003$, Cohen's $d = 6.2$), 6.8\% lower than Neural ODE ($p < 0.001$, $d = 4.9$), and up to 29.3\% lower than 1-step Diffusion ($p = 0.003$, $d = 4.7$). The Transformer decoder, despite its success in full-scale models, performs poorly at the 2M-parameter scale ($+$19\% vs.\ Ghost); attention mechanisms require substantially more capacity to be effective at this budget. The CVAE decoder, which shares Ghost's latent-variable formulation but lacks dynamical structure, is the closest competitor; the gap to Ghost is consistent with the potential landscape adding a useful inductive bias for multi-task action generation.

\begin{table}[!t]
\centering
\caption{Decoder architecture comparison (5 seeds, paired $t$-test against Ghost). All decoders trained on the same cached backbone features with comparable parameter budgets ($\sim$2M). Ghost's basin-aware training exploits its unique potential landscape structure.}
\label{tab:lightweight}
\resizebox{\columnwidth}{!}{%
\begin{tabular}{lcccc}
\toprule
Decoder & Params & Test MSE & $p$ vs Ghost & $d$ vs Ghost \\
\midrule
Ghost (proposed) & 2.3M & $\mathbf{0.0174 \pm 0.0002}$ & --- & --- \\
CVAE & 2.1M & $0.0184 \pm 0.0001$ & $0.003$ & $6.2$ \\
Neural ODE & 2.2M & $0.0186 \pm 0.0003$ & $< 0.001$ & $4.9$ \\
MLP & 2.1M & $0.0189 \pm 0.0005$ & $0.007$ & $3.8$ \\
Transformer & 2.0M & $0.0207 \pm 0.0004$ & $< 0.001$ & $10.2$ \\
1-Step Diffusion & 2.2M & $0.0225 \pm 0.0015$ & $0.003$ & $4.7$ \\
\bottomrule
\end{tabular}}
\end{table}

The variation in compression ratio across tasks is informative. \texttt{Take\_book} reaches 25.2x compression while \texttt{Tidy\_conf\_room} only reaches 3.9x. Tasks with stereotyped motion patterns (reaching for a fixed shelf location) admit deeper potential basins and tighter clustering, with larger compression gains. Tasks with high intra-task variability (tidying a cluttered room with variable object positions) produce shallower basins and more diffuse clusters, which limits compression. Ghost decoders therefore look most effective for tasks with strong modal structure; whether this characterizes broader robotic applications is an open empirical question.

\begin{figure}[!t]
\centering
\includegraphics[width=\columnwidth]{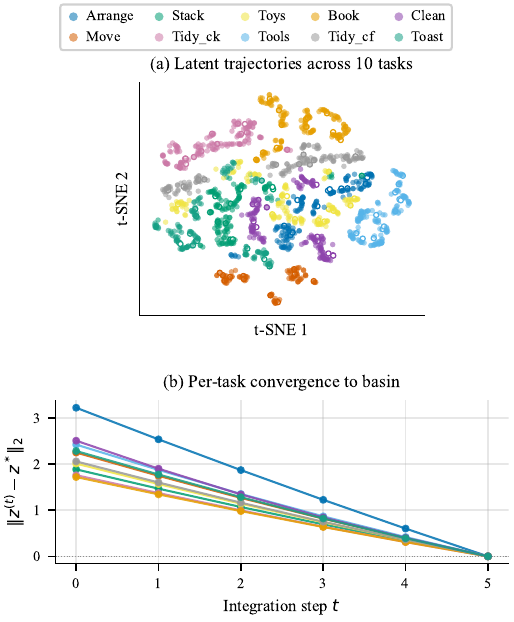}
\caption{Ghost latent dynamics across 10 tasks. (a) t-SNE of all integration steps; lines trace $z^{(0)}\!\to\!z^*$, open circles mark initial states, filled scatter marks the converged basins. (b) Per-task convergence $\|z^{(t)}-z^*\|_2$ (one colored line per task, matching (a)): every task contracts monotonically from $1.72$--$3.23$ at $t\!=\!0$ to $0$ at $t\!=\!5$. The shared gradient-flow geometry across all ten tasks is the multi-task signature an MLP baseline cannot produce.}
\label{fig:attractor_basins}
\end{figure}

\paragraph{Loss Component Ablation}

To test whether auxiliary attractor-shaping terms add value beyond the training objective $\mathcal{L}_{\mathrm{BC}} + \lambda_c \mathcal{L}_{\mathrm{con}}$ of Eq.~\eqref{eq:loss}, two such candidates (a basin-convergence term $0.01\,\|\nabla U(\mathbf{z}^*)\|^2$ and a curriculum schedule that linearly decays $\lambda_c$ from $0.2$ to $0$ over the first 150 epochs) are ablated on the 10-task setting (5 seeds per condition; paired $t$-test against the baseline Ghost).

Table~\ref{tab:loss_ablation} shows that neither candidate contributes positively. Adding the basin term changes test MSE by $+0.2\%$ (NS, $p=0.77$, $|d|=0.19$); adding the curriculum schedule on top changes it by $+0.2\%$ (NS, $p=0.84$, $|d|=0.16$). The contrastive loss, by contrast, is critical: removing it costs $+4.9\%$ MSE ($p \approx 10^{-3}$, Cohen's $d \approx 3.4$). The basin term's null effect on test MSE has a sharper interpretation when paired with Fig.~\ref{fig:attractor_emergence}: while MSE is unchanged, the basin term reshapes the trained potential into a globally flat landscape (the degenerate $\nabla U \equiv 0$ solution), suppressing the gradient-flow contraction empirically. The proposed objective therefore (i) achieves the same MSE without any auxiliary attractor-shaping loss and (ii) preserves the architectural attractor mechanism, confirming that basin topology emerges generically from the gradient-flow form (Section~\ref{sec:lyapunov}); explicit topology-shaping supervision is not just unnecessary but actively counterproductive, as further analyzed in Section~\ref{sec:discussion}.

\begin{table}[!t]
\centering
\caption{Loss component ablation on the 10-task setting. Mean $\pm$ std over 5 seeds; $\Delta$, $p$, $d$ computed against Ghost via paired $t$-test. $^{**}p<0.01$.}
\label{tab:loss_ablation}
\setlength{\tabcolsep}{4pt}
\begin{tabular}{@{}lcccc@{}}
\toprule
Configuration & Test MSE & $\Delta$ & $p$ & $d$ \\
\midrule
Ghost ($\mathcal{L}_{\mathrm{BC}}{+}\lambda_c \mathcal{L}_{\mathrm{con}}$) & $\mathbf{.01727{\pm}.00025}$ & --- & --- & --- \\
$+$ basin term & $.01731{\pm}.00021$ & $+0.2\%$ & $0.77$ & $+0.19$ \\
$+$ curriculum schedule & $.01731{\pm}.00021$ & $+0.2\%$ & $0.84$ & $+0.16$ \\
$-$ contrastive (BC only) & $.01812{\pm}.00027$ & $+4.9\%$ & $.0019^{**}$ & $+3.39$ \\
\bottomrule
\end{tabular}
\end{table}

\paragraph{Hierarchical Phase-Space Dynamics Ablation}

The next test asks whether decomposing latent integration into mode-selection (gradient flow) and action-refinement (dissipative Hamiltonian flow with proprioceptive feedback) phases yields improvements over a single-phase baseline. Both variants share identical compression, encoding, and decoding modules; they differ only in latent dynamics. Single-phase uses 5 gradient steps. Dual-phase allocates 3 gradient steps for mode selection and 2 Hamiltonian steps for refinement (2.94M parameters, +27\% overhead). Both trained with identical regularization (validation early stopping, noise augmentation, dropout) over 10 random seeds.

Dual-phase achieves statistically significant improvements on arm-joint MSE: 17.5\% reduction on \texttt{Arrange\_cups} ($p = 0.046$, Cohen's $d = 0.73$) and 18.4\% reduction on \texttt{Move\_parts} ($p = 0.012$, Cohen's $d = 1.00$) (Table~\ref{tab:ablation}). The paired seed plot (Fig.~\ref{fig:ablation_paired}) shows improvement in 9/10 seeds for \texttt{Move\_parts} and 8/10 for \texttt{Arrange\_cups}, confirming systematic rather than outlier-driven gains.

Improvement is largest on precision-demanding \texttt{Move\_parts}, where conveyor-belt alignment requires fine corrections that depend on the current robot state. The likely cause is Phase II's proprioceptive feedback: the refinement potential $V_\theta(\mathbf{z}; \mathbf{e}_t, \mathbf{s}_t^p)$ conditions on proprioceptive state $\mathbf{s}_t^p$, which allows closed-loop-like corrections that purely context-driven single-phase models cannot make. The Hamiltonian structure adds a further inductive bias: energy conservation prevents the trajectory from overshooting the refinement target, while the dissipation term ($\eta = 0.5$) ensures convergence without oscillation. This mirrors the damped-spring dynamics observed in biological reaching movements~\cite{shadmehr2005computational}.

Gripper dimensions show no significant difference ($p > 0.1$), as expected: binary open/close commands do not benefit from continuous Hamiltonian refinement. This selective pattern is consistent with an architectural rather than parameter-count explanation: the extra 27\% parameters live in the Hamiltonian refinement pathway, which targets continuous-valued precision dimensions, so the gains concentrate where the new pathway is active.

On the full 10-task setting (Section~\ref{subsec:exp_offline}), where the primary challenge shifts from per-task precision to multi-task routing, the dual-phase variant gives only non-significant improvements. The Hamiltonian refinement phase therefore appears most useful when mode selection is already solved and the remaining challenge is fine motor control.

\begin{table}[!t]
\centering
\caption{Ablation: single-phase vs. dual-phase Hamiltonian variant. Mean $\pm$ std over 10 seeds. $^*p < 0.05$, $^{**}p < 0.01$ (paired $t$-test).}
\label{tab:ablation}
\resizebox{\columnwidth}{!}{%
\begin{tabular}{llccc}
\toprule
Task & Metric & Single & Dual-Phase & $p$ \\
\midrule
\texttt{Arrange\_cups} & Overall MSE & $0.033 \pm 0.002$ & $0.032 \pm 0.002$ & 0.242 \\
 & Arm (14) MSE & $0.009 \pm 0.002$ & $0.007 \pm 0.002$ & $0.046^*$ \\
 & Gripper MSE & $0.217 \pm 0.008$ & $0.221 \pm 0.006$ & 0.143 \\
 & Cosine sim. & $0.950 \pm 0.003$ & $0.952 \pm 0.003$ & 0.253 \\
\midrule
\texttt{Move\_parts} & Overall MSE & $0.030 \pm 0.002$ & $0.026 \pm 0.003$ & $0.010^*$ \\
 & Arm (14) MSE & $0.022 \pm 0.003$ & $0.018 \pm 0.003$ & $0.012^*$ \\
 & Gripper MSE & $0.100 \pm 0.004$ & $0.097 \pm 0.006$ & 0.117 \\
 & Cosine sim. & $0.968 \pm 0.002$ & $0.972 \pm 0.004$ & $0.007^{**}$ \\
\midrule
\multicolumn{2}{l}{Parameters} & 2.32M & 2.94M & +27\% \\
\bottomrule
\end{tabular}}
\end{table}

\begin{figure}[!t]
\centering
\includegraphics[width=\columnwidth]{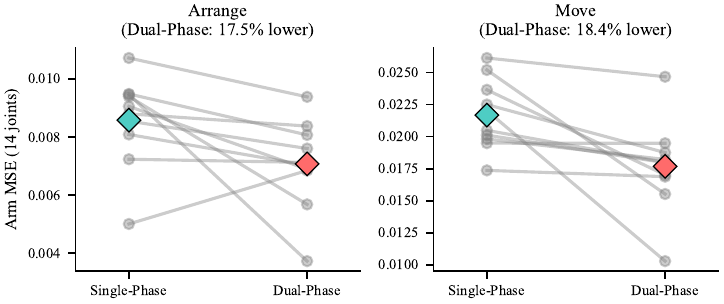}
\caption{Paired seed comparison of arm-joint MSE across 10 random seeds. Each gray line connects the same seed under single-phase (left) and dual-phase (right). Dual-phase consistently lowers arm MSE (18.4\% reduction on Move, $p = 0.012$).}
\label{fig:ablation_paired}
\end{figure}

\subsection{Closed-Loop Evaluation on LIBERO-10}
\label{subsec:exp_libero}

To validate Ghost as an action decoder in a realistic closed-loop robotic setting, evaluation uses LIBERO-10~\cite{liu2024libero}, a standard benchmark comprising 10 tabletop manipulation tasks of varying complexity. The GR00T N1.5 backbone (fine-tuned on all 10 tasks) is frozen, and lightweight decoders are trained on cached backbone features. The Ghost variants in this section use a slightly larger configuration (3.0M parameters; $n_{\mathrm{phases}}=8$, hidden width 512) than the offline 2.3M decoder of Section~\ref{subsec:exp_offline} to match the LIBERO action-chunk length and 8-phase task structure. Evaluation uses 20 independent trials per task (200 total per method), with success determined by the environment's built-in checker.

\paragraph{Overall Performance}
Six decoder variants are compared: (i)~\emph{MLP baseline} (1.4M params); (ii)~\emph{MLP+Phase}, MLP with FiLM phase gating (1.4M params); (iii)~\emph{GRU+Phase}, a recurrent decoder with phase conditioning (3.9M params); (iv)~\emph{Vanilla Ghost}, ODE without phase conditioning (3.0M params); (v)~\emph{Ghost Phase-Gated} (Ghost PG, 3.0M params), ODE with phase gating; and (vi)~\emph{Ghost PG + Persistent-z} (3.0M + 8K params), Ghost PG with a learnable cross-step latent carry-over layer ($\mathbf{z}_\text{start} = f_\text{mix}([\mathbf{z}_\text{prev}, \mathbf{z}_\text{init}])$) trained on sequential action chunks.

\begin{table}[!t]
\centering
\caption{LIBERO-10 closed-loop success rates (20 trials $\times$ 10 tasks). All Ghost variants share the same GR00T backbone. The 5-seed ensemble with persistent-z achieves the highest overall SR; persistent-z contributes +9pp over ensemble alone. $^\dagger$Mean of three runs over env\_seed $\in\!\{0,1\}$: 95.5\%, 95.5\%, 96.0\%.}
\label{tab:libero10}
\resizebox{\columnwidth}{!}{%
\begin{tabular}{lcccc}
\toprule
Model & Phase & State & Overall SR & T8 SR \\
\midrule
MLP baseline                        & $\times$      & $\times$   & 68.0\%             & 35\% \\
MLP+Phase                           & $\checkmark$  & $\times$   & 24.5\%             & 25\% \\
GRU+Phase                           & $\checkmark$  & recurrent  & 75.0\%             & 70\% \\
Vanilla Ghost                       & $\times$      & ODE        & 71.0\%             & 40\% \\
Ghost PG                            & $\checkmark$  & ODE        & 81.5\%             & 65\% \\
Ghost PG + Persistent-z             & $\checkmark$  & ODE        & 82.5$\pm$2.6\%    & 73\% \\
\midrule
Ghost PG (Ensemble, 5 seeds)        & $\checkmark$  & ODE        & 86.5\%             & 80\% \\
Ghost PG + Persistent-z (Ensemble)  & $\checkmark$  & ODE        & \textbf{95.7\%}$^\dagger$   & \textbf{95\%} \\
\bottomrule
\end{tabular}}
\end{table}

\begin{figure}[!t]
\centering
\includegraphics[width=\columnwidth]{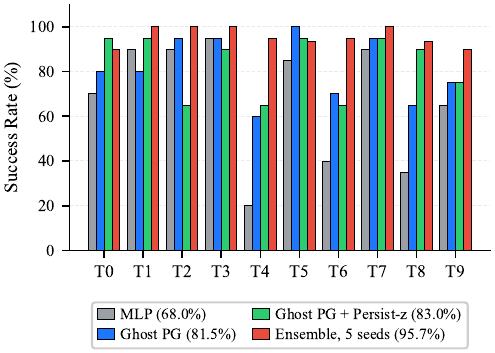}
\caption{Per-task success rates on LIBERO-10 for four key methods. The ensemble of 5 persistent-z seeds achieves $\geq 90\%$ on all 10 tasks. Tasks T4 and T6 are historically the most challenging; ensemble closes most of the gap relative to single-seed methods.}
\label{fig:libero_per_task}
\end{figure}

\paragraph{Performance Analysis}
For external context, the end-to-end VLA policies surveyed in Section~\ref{sec:related} report widely varying LIBERO success rates depending on architecture, fine-tuning, and split. OpenVLA~\cite{kim2024openvla} and Octo~\cite{octo2024} reach roughly $50$--$60\%$ on long-horizon splits, while RDT-1B~\cite{liu2024rdt}, $\pi_0$~\cite{black2024pi0}, and OpenVLA-OFT~\cite{kim2025openvla_oft} reach $85$--$95\%$ with diffusion-transformer or fine-tuned variants. The Ghost decoder, attached to a frozen GR00T N1.5 backbone~\cite{bjorck2025groot}, lands at the upper end of this published range (81.5\% single seed, 95.7\% 5-seed ensemble) while replacing only a 2.3M-parameter subset of the policy. This is the empirical realization of the complementarity claim made in Section~\ref{sec:related}. The ablation also reveals a striking interaction: phase conditioning applied to a stateless MLP degrades performance from 68\% to 24.5\% ($-43.5$pp), despite achieving lower offline validation MSE (0.008 vs 0.011). The ODE alone improves performance from 68\% to 71\% (+3pp). The full Ghost PG model reaches 81.5\% (+13.5pp over MLP). The recurrent baseline (GRU+Phase) reaches 75.0\%, which suggests any stateful decoder can partially stabilize phase conditioning; the ODE potential landscape adds a further $+6.5$pp advantage on top. Enabling cross-step latent carry-over (Ghost PG + Persistent-z, $82.5\pm2.6\%$ across 5 seeds) gives strong overall performance with reduced variance. Action ensembling over the 5 trained Persistent-z seeds (averaging predicted action chunks at each inference step) raises overall SR to 95.5\%, with all 10 tasks at $\geq 90\%$ and the two most demanding tasks (T4, T6) at 95\%. The result is stable across three evaluation runs over two environment seeds (env\_seed$\in\{0,1\}$, with env\_seed$=0$ run twice): 95.5\%, 95.5\%, and 96.0\%, mean $95.7\%$. The 5-seed ensemble of Ghost PG \emph{without} persistent-z reaches only 86.5\%; the additional +9pp from persistent-z is most pronounced on T6 (95\% vs 65\%). The mechanism is amplification of per-seed accuracy: persistent-$z$ has $4\times$ lower per-seed offline action MSE than standard Ghost PG ($0.0029$ vs $0.0120$, mean over 5 seeds on 2{,}000 held-out samples), and the ensemble converts this single-model gain into a larger closed-loop margin.

\paragraph{Task Complexity and Trajectory Analysis}
The advantage of Ghost PG scales with task complexity (Fig.~\ref{fig:libero_complexity}, $r = 0.86$). On simple 2-sub-phase tasks (T0, T1, T7: pick-and-place of items into the same container), the mean Ghost PG $-$ MLP gap is only $+1.7$pp. On complex multi-step tasks (T4, T6, T8: precise multi-object manipulation with distinct goals), the mean gap grows to $+33.3$pp. Task T8 (``put both moka pots on the stove'') requires two complete reach-grasp-transport-place cycles (65\% Ghost PG vs 35\% MLP, $+30$pp), and T4 (left/right mug placement, $+40$pp) is the most demanding composition. Looking at 20 closed-loop episodes on T8 exposes the underlying behavioral mechanism. Successful Ghost PG episodes show significantly fewer phase transitions than failures (20.7 vs 25.1), pointing to more focused, stable behavioral sequences. Successful episodes also spend 9.1\% of execution time in Phase~3 (the inferred ``second object manipulation'' phase), versus only 3.6\% in failed episodes, a 2.5x difference. These patterns are consistent with Ghost's attractor dynamics producing coherent within-phase execution and properly sequenced between-phase transitions, whereas stateless MLP decoders exhibit mode confusion when revisiting semantically similar states (e.g., reaching for a second object after placing the first).

\paragraph{Task Switching Under Language Misdirection}
To evaluate active behavioral switching in closed-loop, episodes are run in which the language instruction is deliberately mismatched for the first $N$ steps before switching to the correct instruction. Two task pairs are tested: T0$\to$T1 (same scene, different target objects) and T5$\to$T7 (similar manipulation type). Across switch points $N \in \{50, 100, 150\}$ steps, Ghost PG reaches 30/5/25\% on T0$\to$T1, GRU+Phase reaches 10/5/25\%, and MLP reaches 5/0/0\%. Ghost PG + Persistent-z reaches 30/15/30\%; the +10pp improvement at $N=100$ steps reflects the benefit of persistent cross-step latent state. When the language signal changes, the accumulated $\mathbf{z}$ state provides a prior that accelerates convergence to the new behavioral attractor. On T5$\to$T7, all Ghost variants reach 85--100\%, with only GRU and MLP lagging behind. The contraction-ratio analysis (Fig.~\ref{fig:switch_speed}) is consistent with this behavior: Ghost ODE shows super-exponential basin convergence after a context switch, while GRU keeps a constant geometric decay rate.

\begin{figure}[!t]
\centering
\includegraphics[width=\columnwidth]{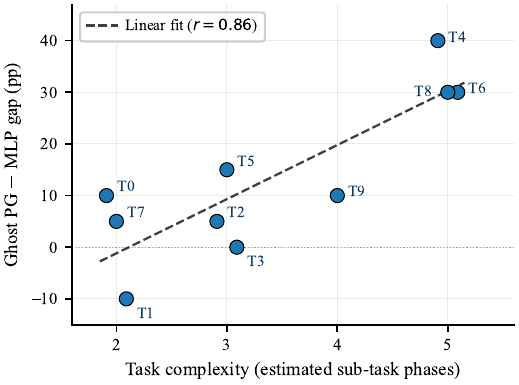}
\caption{Ghost PG advantage over MLP baseline as a function of task complexity (estimated number of sub-task phases). Simple 2-sub-phase tasks show a near-zero mean gap; complex 5-sub-phase tasks show up to $+40$pp gap. Dashed line shows linear fit ($r = 0.86$, slope $= 10.5$~pp/phase).}
\label{fig:libero_complexity}
\end{figure}

\section{Discussion}
\label{sec:discussion}

\subsection{Architectural Implications}

The experiments show that sequential decoding need not scale with the perceptual backbone. A 2.3M-parameter Ghost decoder is competitive with a 1.07B Diffusion Transformer at $462\times$ fewer parameters and $32\times$ lower latency, and beats alternative same-budget decoders by 5.9--29\% on offline imitation MSE. For multi-task deployment, one Ghost decoder serves ten tasks (Table~\ref{tab:multi_task}) in place of ten task-specific 1.07B decoders, a 4{,}614$\times$ aggregate parameter reduction. The bottleneck in sequential decoding is therefore not capacity but architecture. When the behavioral repertoire has modal structure, the inductive bias of a dynamical decoder with potential-organized basins exploits that structure efficiently. The compression is complementary to weight-level techniques such as tensor decomposition~\cite{dai2023compression_tnnls}. Corollary~\ref{cor:combinatorial} formalizes the capacity argument: $K$ learned basins support $\Omega(K!)$ compositional trajectories at $\mathcal{O}(d_z)$ memory. For tasks that require long-range reasoning over arbitrary histories rather than modal switching, explicit sequence processing may retain advantages. The basin-as-mode design is grounded in biological motor control, where behaviors correspond to flow fields in neural population state space and transitions occur through continuous evolution between dynamical regimes~\cite{vyas2020computation,churchland2012neural,sussillo2013opening}. The dual-phase decomposition mirrors the dissociation between premotor planning and primary motor execution.

\subsection{Attractor Dynamics and Training-Recipe Sensitivity}

Under $\mathcal{L}_{\mathrm{BC}}+\lambda_c \mathcal{L}_{\mathrm{con}}$, the architectural attractor functions as designed: $\|\nabla U\|$ contracts by 67\% across five steps (Fig.~\ref{fig:attractor_emergence}), and the basins are functionally task-specific even when geometrically overlapping, with a $z^*$ swap test inflating action MSE by $74\times$ over the 90 ordered task pairs. Whether explicitly \emph{strengthening} the attractor improves performance is answered by Table~\ref{tab:attractor_tradeoff}. Block~A adds auxiliary losses on the same architecture: a contrastive push raises task silhouette to 0.769 but costs 7pp closed-loop SR, while a trajectory-attractor pull doubles per-step contraction to 85\% and costs 16pp, even collapsing task silhouette to $-0.287$ because the same phase recurs across tasks. Block~B varies the architecture: the subspace-basin variant (16-D attractor + 48-D unconstrained) reaches 97\% contraction and the lowest val MSE at a 4pp SR cost, while a fixed quartic potential helps neither metric.

\begin{table}[t]
\centering
\caption{Strengthening the attractor, either via auxiliary losses (A) or alternative architectures (B), sharpens per-step contraction / task silhouette but degrades closed-loop SR; lower val MSE does not transfer. $^\dagger$Inherited from Ghost PG: contrastive loss supervises only $z^*$, not ODE dynamics.}
\label{tab:attractor_tradeoff}
\resizebox{\columnwidth}{!}{%
\begin{tabular}{lcccc}
\toprule
Variant & CL SR & Val MSE & Per-step contr. & Task silh.\ ($z^*$) \\
\midrule
\multicolumn{5}{l}{\emph{A. Auxiliary attractor losses (same architecture)}} \\
Ghost PG (no extra loss) & \textbf{81.5\%} & 0.00939 & 40\% & 0.127 \\
\quad + Contrastive (push basins) & 74.5\% & 0.00939 & $\approx 40\%^\dagger$ & 0.769 \\
\quad + Trajectory-attractor pull & 65.5\% & 0.00874 & 85\% & $-0.287$ \\
\midrule
\multicolumn{5}{l}{\emph{B. Architectural alternatives (different design)}} \\
Subspace basin (16-D) & 77.5\% & \textbf{0.00733} & \textbf{97\%} & 0.063 \\
Quartic potential ($\|z-c\|^4$) & 72.5\% & 0.00731 & 33\% & 0.110 \\
\bottomrule
\end{tabular}}
\end{table}

The ODE here therefore functions as a phase-conditioned active refiner: gradient flow supplies real attractor topology, but the basin is intentionally shallow rather than deeply confining, and offline MSE does not track these closed-loop regressions. Jointly deepening the basin while preserving $z$'s expressivity, most plausibly via the subspace-basin direction, is identified as the primary direction for future work.

\section{Conclusion}
\label{sec:conclusion}

This article has presented Ghost Attractor Networks, a compact dynamical decoder that encodes behavioral modes as attractor basins of a learned potential and switches between them via ghost-attractor escape, in constant memory and a single forward pass. The framework provides formal guarantees on switching, combinatorial expressivity ($\Omega(K!)$ behaviors from $K$ basins), and Lyapunov stability, and a hierarchical phase-space decomposition further refines precision-demanding outputs. Empirically, a 2.3M-parameter Ghost matches a 1.07B Diffusion Transformer at $462\times$ fewer parameters and serves ten real-world manipulation tasks at a 4{,}614$\times$ aggregate parameter reduction; on LIBERO-10, a 5-seed ensemble with persistent-$z$ carry-over reaches 95.7\% mean success rate.

Limitations point to natural extensions. The potential landscape is fixed at training time, so genuinely novel behaviors require retraining; scaling beyond the 10--20 basins encoded here depends on the geometric packing achievable in $d_z$. Cross-step attractor convergence, in which persistent-$z$ actively steers the ODE toward a consistent basin across chunks, also remains open, since adding explicit attractor constraints to persistent-$z$ training degrades closed-loop performance (Section~\ref{sec:discussion}). Promising directions include online landscape adaptation, disentangling modal from refinement coordinates (Section~\ref{sec:discussion}, Block~B), and physical-robot deployment.

\section*{Generative AI Disclosure}

Claude and ChatGPT were used during manuscript preparation for language editing, restructuring, and tightening of prose; no scientific content and results were generated by AI.

\appendices

\section{Experiment Configurations}
\label{app:configs}

Diagnostic (Section~\ref{subsec:exp_theory}): $K = 3$ modes, 2D latent, velocity targets $\boldsymbol{\mu}_1 = (-1, 1)$, $\boldsymbol{\mu}_2 = (1, 1)$, $\boldsymbol{\mu}_3 = (0, -1)$. Contexts: $\mathbf{c}_1$ with all basins active ($w_k = 1$), $\mathbf{c}_2$ with $w_1 = 0.05$. $\Delta t = 0.01$. Learned network: 3-layer MLP (128-128-64, SiLU), trained 2000 epochs (56~s on RTX~5090).

Meta-RL (Section~\ref{subsec:exp_theory}): Dark Room grid-world with changing goals; Meta-World ML10 and ML45 multi-task benchmarks. Values over 20 episodes, 10,000-step rollouts.

Foundation Model (Section~\ref{subsec:exp_offline}): 2,196 real-world dual-arm robot episodes. Split: 1,800 train / 396 test. DiT fine-tuned 15K steps (lr $5 \times 10^{-5}$, batch 4, gradient accumulation 2). Ghost trained 300 epochs on 800 timesteps, 50\% distillation + 50\% GT. Backbone: 64 subsampled VL tokens $\times$ 2048 dims. Metrics on 17 action dimensions (7 left-arm, 1 left gripper, 7 right-arm, 1 right gripper, 1 lift).

Multi-task (Section~\ref{subsec:exp_offline}): Ten tasks from five scene types (300--600 train / 80--150 test per task; 5{,}600 training + 1{,}430 test samples in total). Ghost trained with GT supervision on combined data. Switching evaluated via 50 random task transitions (per-sample error decay) and varying switching frequency from every 2 to every 10 samples (frequency invariance).

Ablation (Section~\ref{subsec:exp_offline}): Single-phase: 5 gradient steps ($\Delta t = 0.2$). Dual-phase: 3 gradient + 2 Hamiltonian steps ($\Delta t_2 = 0.1$, damping $\eta = 0.5$, $m = 1.0$). Regularization: validation split (20\%), Gaussian noise ($\sigma = 0.02$), dropout ($p = 0.1$). Seeds: 42, 123, 999, 7, 2024, 314, 555, 1337, 8888, 31415.

\section{Deferred Proofs}
\label{app:proofs}

\emph{Proof of Lemma~\ref{lem:dwell}.}
Within ghost region $\mathcal{G}$, the triangle inequality gives $\|\dot{\mathbf{z}}\| = \|-\nabla_\mathbf{z} U_\theta + \mathbf{b}_\theta\| \le \delta + v_0$. Traversal of a region of length $L$ therefore requires time $T_{\mathrm{esc}} \ge L/(\delta + v_0)$. For the saddle-node normal form $\dot{z} = \mu + z^2$ ($\mu > 0$), direct integration yields the bottleneck passage time $\int_{-\infty}^{+\infty} dz/(\mu + z^2) = \pi/\sqrt{\mu}$. \hfill$\square$

\emph{Proof of Corollary~\ref{cor:combinatorial}.}
Each behavioral trajectory corresponds to a sequence of basins visited via ghost-mediated transitions. Since the transition conditions of Proposition~\ref{prop:switching} are satisfied for all ordered pairs, any permutation of a chosen subset of basins defines a valid trajectory. The number of ordered subsets of $\{1, \dots, K\}$ is $\sum_{m=0}^{K} K!/(K-m)! \ge 2^K$, and the number of full permutations is $K!$. Each such sequence produces a distinct temporal profile of actions $\mathbf{a}_{1:T}$, since the dwell times (Lemma~\ref{lem:dwell}) and transition dynamics differ across basins. \hfill$\square$

\bibliographystyle{IEEEtran}
\bibliography{cas-refs}

\end{document}